\DeclareMathOperator{\signswish}{sign-Swish}
\newcommand{\LP}[2][\varrho]{\tL^{#1}_{#2}}
\newcommand{\bftab}[1]{\text{\fontseries{b}\selectfont #1}}
\newcommand{\titlename}{ Understanding Neural Network Binarization with Forward and Backward Proximal Quantizers } 
\title{\titlename}
\author{%
Yiwei Lu\thanks{Work done during an internship at Huawei Noah's Ark Lab.} \\
School of Computer Science\\
University of Waterloo \\
Vector Institute \\
\texttt{yiwei.lu@uwaterloo.ca} 
\And Yaoliang Yu \\
School of Computer Science\\
University of Waterloo \\
Vector Institute \\
\texttt{yaoliang.yu@uwaterloo.ca} 
\AND
Xinlin Li  \\
Huawei Noah's Ark Lab \\
\texttt{xinlin.li1@huawei.com} 
\And 
Vahid Partovi Nia \\
Huawei Noah's Ark Lab \\
\texttt{vahid.partovinia@huawei.com}
}
\begin{document}
\maketitle

\begin{abstract}
In neural network binarization, BinaryConnect (BC) and its variants are considered the standard. These methods apply the $\sign$ function in their forward pass and their respective gradients are backpropagated to update the weights. However, the derivative of the $\sign$ function is zero whenever defined, which consequently freezes training. Therefore, implementations of BC (e.g., BNN) usually replace the derivative of $\sign$ in the backward computation with identity or other \emph{approximate gradient} alternatives. Although such practice works well empirically, it is largely a heuristic or ``training trick.'' We aim at shedding some light on these training tricks from the optimization perspective. Building from existing theory on ProxConnect (PC, a generalization of BC), we (1) equip PC with \emph{different} forward-backward quantizers and obtain ProxConnect++ (PC++) that includes existing binarization techniques as special cases; (2) derive a principled way to synthesize forward-backward quantizers with automatic theoretical guarantees; (3) illustrate our theory by proposing an enhanced binarization algorithm BNN++; (4) conduct image classification experiments on CNNs and vision transformers, and empirically verify that BNN++ generally achieves competitive results on binarizing these models.
\end{abstract}


\section{Introduction}
\label{sec:intro}

The recent success of numerous applications in machine learning is largely fueled by training big models with billions of parameters, \eg, GPTs in large language models \citep{BrownMRSKDNSSA20, BidermanSABOHKPPR23}, on extremely large datasets.  However, as such models continue to scale up, end-to-end training or even fine-tuning becomes prohibitively expensive, due to the heavy amount of computation,  memory and storage required. Moreover, even after successful training, deploying these models on resource-limited devices or environments that require real-time inference still poses significant challenges. 

A common way to tackle the above problems is through model compression, 
such as pruning \citep{PanPJWFO21, RyooPADA21, RaoZLLZH21}, reusing attention \citep{BhojanapalliCVLJLCK21}, weight sharing \citep{zhang2022minivit}, structured factorization \citep{RenDDYLSD21}, and network quantization \citep{LiuWHZMG21, LiYWC22, LinZSLZ21, GhaffariTTAN22}.
Among them, network quantization (i.e., replacing full-precision weights with lower-precision ones) is a popular approach. 
In this work we focus on an extreme case of network quantization: binarization, i.e., constraining a subset of the weights to be only binary (\ie, $\pm 1$), 
with the benefit of much reduced memory and storage cost, as well as inference time through simpler and faster matrix-vector multiplications, which is one of the main computationally expensive steps in transformers and the recently advanced vision transformers \citep{TouvronCDMSJ21, DosovitskiyBKWZUDMHG20, LiuLCHWZLG21}. 


For neural network binarization, BinaryConnect \citep[BC,][]{CourbariauxBD15} is considered the de facto standard. BC applies the $\sign$ function to binarize the weights in the forward pass, and evaluates the gradient at the binarized weights using the Straight Through Estimator \citep[STE,][]{BengioLC13}\footnote{Note that we refer to STE as its original definition by \citet{BengioLC13} for binarizing weights, and other variants of STE (e.g., in BNN) as  approximate gradient.}. 
This widely adopted training trick has been formally justified from an optimization perspective: \citet{DockhornYEZP21}, among others, identify BC as a nonconvex counterpart of dual averaging, which itself is a special case of the generalized conditional gradient algorithm. \citet{DockhornYEZP21} further propose ProxConnect (PC) as an extension of BC, by allowing arbitrary proximal quantizers (with $\sign$ being a special case) in the forward pass.

However, practical implementations  \citep[e.g.,][]{HubaraCSEB16,BaiWL18,DarabiBCN18} usually apply an \emph{approximate gradient} of the $\sign$ function on top of STE. For example, \citet{HubaraCSEB16} employ the hard $\tanh$ function as an approximator of $\sign$. Thus, in the backward pass, the derivative of $\sign$ is approximated by the indicator function $\mathbf{1}_{[-1,1]}$, the derivative of hard $\tanh$. Later, \citet{DarabiBCN18} consider the $\signswish$ function as a more accurate and flexible approximation in the backward pass (but still employs the $\sign$ in the forward pass).

Despite their excellent performance in practice, \emph{approximate gradient} approaches cannot be readily understood in the PC framework of \citet{DockhornYEZP21}, which does not equip any quantization in the backward pass. 
Thus, the main goal of this work is to further generalize PC and improve our understanding of approximate gradient approaches.
Specifically, we introduce PC++ that comes with a \emph{pair} of forward-backward proximal quantizers, and we show that most of the existing 
approximate gradient approaches are special cases of our proximal quantizers, and hence offering a formal justification of their empirical success from an optimization perspective. Moreover, inspired by our theoretical findings, we propose a novel binarization algorithm BNN++ that  improves BNN+ \citep{DarabiBCN18} on both theoretical convergence properties and empirical performances. Notably, our work provides direct guidance on designing new forward-backward proximal quantizers in the PC++ family, with immediate theoretical guarantees while enabling streamlined implementation and comparison of a wide family of existing quantization algorithms.


Empirically, we benchmark existing PC++ algorithms (including the new BNN++) on image classification tasks on CNNs and vision transformers. Specifically, we perform weight (and activation) binarization on various datasets and models. Moreover, we explore the fully binarized scenario, where the dot-product accumulators are also quantized to 8-bit integers. In general, we observe that BNN++ is very competitive against existing approaches on most tasks, and achieves 30x reduction in memory and storage with a modest 5-10\% accuracy drop compared to full precision training.
%

We summarize our main contributions in more detail:
\begin{itemize}[leftmargin=*,topsep=1pt]
    \item We generalize ProxConnect with forward-backward quantizers and introduce ProxConnect++ (PC++) that includes existing binarization techniques as special cases.
    \item We derive a principled way to synthesize forward-backward quantizers with theoretical guarantees. Moreover, we design a new BNN++ variant to illustrate our theoretical findings. 
    \item We empirically compare different choices of forward-backward quantizers on image classification benchmarks, and confirm that BNN++ is competitive against existing alternatives. 
\end{itemize}
\section{Background}
\label{sec:background}
In neural network quantization, we aim at minimizing the usual (nonconvex) objective function $\ell(\wv)$ with discrete weights $\wv$:
\begin{align}
\label{eq:prob}
\min_{\wv \in Q} ~ \ell(\wv), 
\end{align}
where $Q \subseteq \RR^d$ is a discrete, nonconvex quantization set such as $Q = \{\pm 1\}^d$. The acquired discrete weights $\wv \in Q$ are compared directly with continuous full precision weights, which we denote as $\wv^*$ for clarity. 
While our work easily extends to most discrete set $Q$, we focus on $Q = \{\pm 1\}^d$ since this binary setting remains most challenging and leads to the most significant savings. 
%
Existing binarization schemes can be largely divided into the following two categories.

\paragraph{Post-Training Binarization (PTB):} we can formulate post-training binarization schemes as the following standard forward and backward pass:
\begin{align}
\wv_{t} = \tP_Q(\wv_{t}^*), 
\qquad
\wv_{t+1}^* &= \wv_t^* - \eta_t\widetilde\nabla \ell(\wv_t^*),
\end{align}
where $\tP_Q$ is the projector that binarizes the continuous weights $\wv^*$ deterministically (\eg, the $\sign$ function) or stochastically\footnote{We only consider deterministic binarization in this paper.}, and $\widetilde\nabla \ell(\wv_t^*)$ denotes a sample (sub)gradient of $\ell$ at $\wv_t^*$. 
We point out that PTB is merely a post-processing step, i.e., the binarized weights $\wv_t$ do not affect the update of the continuous weights $\wv_t^*$, which are obtained through normal training. As a result, there is no guarantee that the acquired discrete weights $\wv_t$ is a good solution (either global or local) to \cref{eq:prob}.

\paragraph{Binarization-Aware Training (BAT):} we then recall the more difficult binarization-aware training scheme BinaryConnect (BC), first initialized by \citet{CourbariauxBD15}:
\begin{align}
\label{eq:BC}
\wv_{t} = \tP_Q(\wv_{t}^*), 
\qquad
\wv_{t+1}^* = \wv_t^* - \eta_t \widetilde\nabla \ell(\wv_t)
,
\end{align}
where we spot that the gradient is evaluated at the binarized weights $\wv_t$ but used to update the continuous weights $\wv_t^*$. This approach is also known as Straight Through Estimator \citep[STE,][]{BengioLC13}. Note that it is also possible to update the binarized weights instead, effectively performing the proximal gradient algorithm to solve \eqref{eq:prob}, as shown by \citet{BaiWL18}:
\begin{align}
\wv_{t} = \tP_Q(\wv_{t}^*), 
\qquad
\wv_{t+1}^* = \wv_t - \eta_t \widetilde\nabla \ell(\wv_t)
.
\end{align}
This method is known as ProxQuant, and will  serve as a baseline in our experiments.

\subsection{ProxConnect}
\citet{DockhornYEZP21} proposed ProxConnect (PC) as a broad generalization of BinaryConnect in \eqref{eq:BC}:
\begin{align} 
\label{eq:pc}
\wv_t = \tP_{\textsf{r}}^{\mu_{t}} (\wv_t^*), \quad 
\wv_{t+1}^* = \wv_t^* - \eta_t \widetilde\nabla \ell(\wv_t),
\end{align}
where $\mu_t := 1 + \sum_{\tau=1}^{t-1} \eta_\tau$, $\eta_t > 0$ is the step size, and  
$\tP_{\rsf}^{\mu_{t}}$ is the proximal quantizer: 
\begin{align}
\tP_{\rsf}^\mu(\wv) &:= \argmin_{\zv} ~ \tfrac{1}{2\mu}\|\wv- \zv\|_2^2 + \rsf(\zv), \mbox{ and } \\
\Mc_{\rsf}^\mu(\wv) &:= \min_{\zv} ~ \tfrac{1}{2\mu}\|\wv- \zv\|_2^2 + \rsf(\zv).
\end{align}
In particular, when the regularizer $\textsf{r} = \iota_Q$ (the indicator function of $Q$),  $\tP_{\textsf{r}}^{\mu_{t}} = \tP_{Q}$ (for any $\mu_{t}$) and we recover BC in \eqref{eq:BC}. \citet{DockhornYEZP21} showed that the PC update \eqref{eq:pc} amounts to applying the generalized conditional gradient algorithm to a smoothened dual of the regularized problem: 
\begin{align}
\min_{\wv}~ \left [\ell(\wv) + \rsf(\wv) \right] \approx 
\min_{\wv^*}~ \ell^*(-\wv^*) + \Mc_{\rsf^*}^\mu(\wv^*),
\end{align}
where $f^*(\wv^*):= \max_{\wv} \langle \wv, \wv^*\rangle - f(\wv)$ is the Fenchel conjugate of $f$.
The theory behind PC thus formally justifies  STE from an optimization perspective. We provide a number of examples of the proximal quantizer $\tP_{\textsf{r}}^{\mu_{t}}$ in \Cref{app:pq}.

Another natural cousin of PC is the reversed PC (rPC):
\begin{align}
\textstyle
\wv_t = \tP_{\rsf}^{\mu_{t}} (\wv_t^*), \quad 
\wv_{t+1}^* = \wv_t - \eta_t \widetilde\nabla \ell(\wv_t^*),
\end{align}
which is able to exploit the rich landscape of the loss by evaluating the gradient at the continuous weights $\wv_t^{*}$. Thus, we also include it as a baseline in our experiments. 

We further discuss other related works in \Cref{sec:rel}.
\section{Methodology}
\label{sec:method}

One popular heuristic to explain BC is through the following reformulation of  problem \eqref{eq:prob}: 
\begin{align}
\min_{\wv^*} ~ \ell\bigl(\tP_Q(\wv^*)\bigl).
\end{align}
Applying (stochastic) ``gradient'' to update the continuous weights we obtain: 
\begin{align}
\wv_{t+1}^* = \wv_t^* - \eta_t \cdot \tP_Q'(\wv_t^*) \cdot \widetilde\nabla \ell(\tP_Q(\wv_t^*)).
\end{align}
Unfortunately, the derivative of the projector $\tP_Q$ is 0 everywhere except at the origin, where the derivative actually does not exist. BC \citep{CourbariauxBD15}, see \eqref{eq:BC},  simply ``pretended'' that $\tP_Q' = I$. Later works propose to replace the troublesome $\tP_Q'$ by the derivative of functions that approximate $\tP_Q$, e.g., the hard tanh in BNN \citep{HubaraCSEB16} and the $\signswish$ in BNN+ \citep{DarabiBCN18}. Despite their empirical success, it is not clear what is the underlying optimization problem or if it is possible to also replace the projector inside $\widetilde\nabla \ell$, \ie, allowing the algorithm to evaluate gradients at continuous weights, a clear advantage demonstrated by \citet{BaiWL18} and \citet{DockhornYEZP21}. Moreover, the theory established in PC, through a connection to the generalized conditional gradient algorithm, does not apply to these modifications yet, which is a gap that we aim to fill in this section.






\subsection{ProxConnect++}
To address the above-mentioned issues, 
we propose to study the following regularized problem: 
\begin{align}
\label{eq:tran-reg}
\min_{\wv^*}~ \ell(\Tsf(\wv^*)) + \rsf(\wv^*),
\end{align}
as a relaxation of the (equivalent) reformulation of \eqref{eq:prob}: 
\begin{align}
\min_{\wv^*} ~ \ell(\tP_Q(\wv^*)) + \iota_Q(\wv^*).
\end{align}
In other words, $\Tsf: \RR^d \to \RR^d$ is some transformation that approximates $\tP_Q$ and the regularizer $\rsf: \RR^d \to \RR$ approximates the indicator function $\iota_Q$. Directly applying ProxConnect in \eqref{eq:pc} we obtain\footnote{We assume throughout that $\Tsf$, and any function whose derivative we use, are locally Lipschitz so that their generalized derivative is always defined, see \citet{RockafellarWets98}.}: 
\begin{align}
\label{eq:PC-ext-tmp}
\wv_t = \tP_{\textsf{r}}^{\mu_{t}} (\wv_t^*), ~~
\wv_{t+1}^* = \wv_t^* - \eta_t \Tsf'(\wv_t) \cdot \widetilde\nabla \ell\big(\Tsf(\wv_t)\big).
\end{align}
Introducing the forward and backward proximal quantizers:
\begin{align}
\label{eq:FB-cons}
\Fsf^{\mu}_{\rsf} := \Tsf \circ \tP_{\rsf}^{\mu}, \quad
\Bsf^{\mu}_{\rsf} := \Tsf' \circ \tP_{\rsf}^{\mu}, 
\end{align}
we can rewrite the update in \eqref{eq:PC-ext-tmp} simply as: 
\begin{align}
\label{eq:prox+}
\wv_{t+1}^* = \wv_t^* - \eta_t \cdot \Bsf^{\mu_{t}}_{\rsf}(\wv_t^*)\cdot \widetilde\nabla \ell\big(\Fsf_{\rsf}^{\mu_{t}} (\wv_t^*) \big).
\end{align}
It is clear that the original ProxConnect corresponds to the special choice
\begin{align}
\Fsf_{\rsf}^\mu = \tP_{\rsf}^\mu, \quad \Bsf_{\rsf}^\mu \equiv I.
\end{align}
Of course, one may now follow the recipe in \eqref{eq:FB-cons} to design new forward-backward quantizers. We call this general formulation in \eqref{eq:prox+} ProxConnect++ (PC++), which covers a broad family of algorithms. 

\begin{figure*}
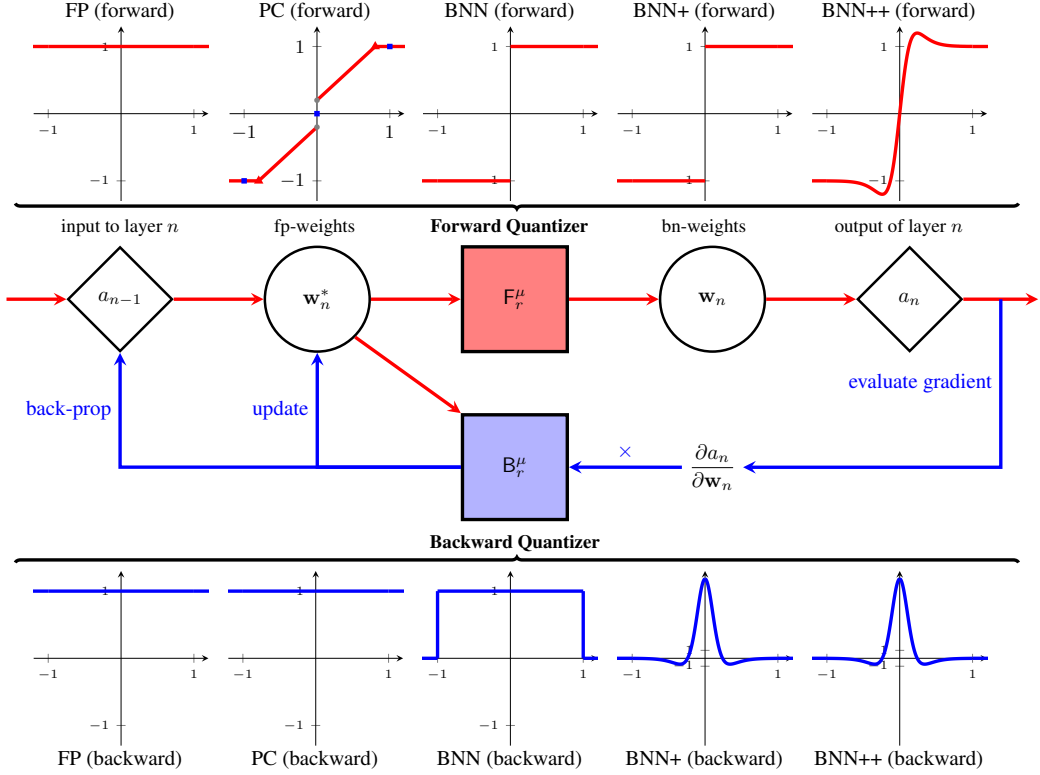

    \centering
    \resizebox{0.99\textwidth}{!}{\begin{tikzpicture}
 
\node[draw,
    very thick,
    diamond,
    minimum size=1.75cm,
] (a1) at (0,0){$a_{n-1}$};
\node[draw=none] at (0,1.2){\small input to layer $n$};
 

 
\node [draw,
    very thick,
    circle,
    minimum size = 1.75cm,
    right=1.5cm of a1
]  (fpw) {$\mathbf{w}_{n}^*$};
\node[draw=none] at (3.25,1.2){\small fp-weights};

\node [draw,
    ultra thick,
    fill=red!50, 
    minimum width=1.75cm, 
    minimum height=1.75cm,
    right=1.5cm of fpw
] (fq) {$\mathsf{F}_r^{\mu}$};
\node[draw=none] at (6.5,1.2){\small \textbf{Forward Quantizer}};

\node [draw,
    very thick,
    circle,
    minimum size = 1.75cm,
    right=1.5cm of fq
] (bw) {$\mathbf{w}_{n}$};
\node[draw=none] at (9.75,1.2){\small bn-weights};

\node[draw,
    very thick,
    diamond,
    minimum size=1.75cm,
    right=1.5cm of bw
] (a2){$a_{n}$};
\node[draw=none] at (13,1.2){\small output of layer $n$};

\node[draw=none, below = 1.4cm of bw] (gradient){$\dfrac{\partial a_{n}}{\partial \mathbf{w}_{n}}$};
 
\node [draw,
    ultra thick,
    fill=blue!30, 
    minimum width=1.75cm, 
    minimum height=1.75cm,
    below = 1cm of fq
]  (bq) {$\mathsf{B}_r^{\mu}$};
\node[draw=none, below = 0.1cm of bq] {\small \textbf{Backward Quantizer}};
 

\draw [stealth-, red, ultra thick] (a1.west) -- ++ (-1,0) ;

\draw[-stealth,red, ultra thick] (a1.east) -- (fpw.west);
 
\draw[-stealth,red, ultra thick] (fpw.east) -- (fq.west) ;

\draw[-stealth,red, ultra thick] (fq.east) -- (bw.west) ;

\draw[-stealth,red, ultra thick] (bw.east) -- (a2.west) ;
 
\draw[-stealth,red, ultra thick] (a2.east) -- ++ (1.25,0) 
    node[midway](output){}node[midway,above]{};
 
\draw[-stealth, blue, ultra thick] (output.center) |- (gradient.east)
    node[near start,left]{evaluate gradient};
    
\draw[-stealth, blue, ultra thick] (gradient.west) -- (bq.east)
    node[midway,above]{$\times$};    
    
\draw[-stealth, blue, ultra thick] (bq.west) -| (a1.south) 
    node[near end,left]{back-prop};

\draw[-stealth, red, ultra thick] (fpw.south east) -- (bq.north west) ;
    
\draw[-stealth, blue, ultra thick] (bq.west) -| (fpw.south) 
node[near end,left]{update};
 
\draw [decorate,
    ultra thick,
    decoration = {brace}] (-1.75,-4.4) --  (15,-4.4);
    
\draw [decorate,
    ultra thick,
    decoration = {brace,mirror}] (-1.75,1.6) --  (15,1.6);
    
\node at (0, 3.1)
{\input{img/fb-quantizers/identity}};
\node at (0, 4.8){FP (forward)};

\node at (0, -6)
{\input{img/fb-quantizers/identity-b}};
\node at (0, -7.7){FP (backward)};

\node at (3.25, 3.1)
{\input{img/fb-quantizers/lq}};
\node at (3.25, 4.8){PC (forward)};

\node at (3.25, -6)
{\input{img/fb-quantizers/identity-b}};
\node at (3.25, -7.7){PC (backward)};

\node at (6.5, 3.1)
{\input{img/fb-quantizers/sign}};
\node at (6.5, 4.8){BNN (forward)};

\node at (6.5, -6)
{\input{img/fb-quantizers/indicator}};
\node at (6.5, -7.7){BNN (backward)};

\node at (9.75, 3.1)
{\input{img/fb-quantizers/sign}};
\node at (9.75, 4.8){BNN+ (forward)};

\node at (9.75, -6)
{\input{img/fb-quantizers/gss}};
\node at (9.75, -7.7){BNN+ (backward)};

\node at (13, 3.1)
{\input{img/fb-quantizers/ss10}};
\node at (13, 4.8){BNN++ (forward)};

\node at (13, -6)
{\input{img/fb-quantizers/gss}};
\node at (13, -7.7){BNN++ (backward)};

\end{tikzpicture}}
    \caption{Forward and backward pass for ProxConnect++ algorithms (red/blue arrows indicate the forward/backward pass), where fp denotes full precision, bn denotes binary and back-prop denotes backpropagation. }
    \label{fig:prox+}
\end{figure*}

Conversely, the complete characterization of proximal quantizers in \citet{DockhornYEZP21} allows us also to reverse engineer $\Tsf$ and $\rsf$ from manually designed forward and backward quantizers. As we will see, most existing forward-backward quantizers turn out to be special cases of our proximal quantizers, and thus their empirical success can be justified from an optimization perspective. Indeed, for simplicity, let us restrict all quantizers to univariate ones that apply component-wise. Then, the following result is proven in \Cref{sec:app-proof}.
\begin{restatable}{corollary}{FB}
A pair of forward-backward quantizers $(\Fsf, \Bsf)$ admits the decomposition in \eqref{eq:FB-cons} (for some smoothing parameter $\mu$ and regularizer $\rsf$) iff both $\Fsf$ and $\Bsf$ are functions of $\tP(w) := \int_{-\infty}^w \tfrac{1}{\Bsf(\omega)}\rmd \Fsf(\omega)$, which is 
proximal 
(\ie, monotone, compact-valued and with a closed graph). 
\label{thm:proxquant}
\end{restatable}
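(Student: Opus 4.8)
The plan is to prove both implications by pairing the complete characterization of univariate proximal operators from \citet{DockhornYEZP21}---a set-valued map equals $\tP_{\rsf}^{\mu}$ for some regularizer $\rsf$ and parameter $\mu$ iff it is \emph{proximal} (monotone, compact-valued, and with closed graph)---with a Stieltjes chain rule. Everything then reduces to recognizing that the defining integral $\tP(w) = \int_{-\infty}^{w} \tfrac{1}{\Bsf(\omega)}\,\rmd \Fsf(\omega)$ recovers, up to an additive constant, the proximal operator concealed in the pair $(\Fsf, \Bsf)$.

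For the ``only if'' direction, suppose $(\Fsf, \Bsf)$ admits \eqref{eq:FB-cons}, so $\Fsf = \Tsf \circ P$ and $\Bsf = \Tsf' \circ P$ with $P := \tP_{\rsf}^{\mu}$, which is proximal by the characterization. I would first establish the differential identity $\rmd \Fsf = \rmd(\Tsf \circ P) = (\Tsf' \circ P)\,\rmd P = \Bsf\,\rmd P$ via the chain rule for Stieltjes measures, so that $\tfrac{1}{\Bsf}\,\rmd \Fsf = \rmd P$ and hence $\tP(w) = \int_{-\infty}^{w} \rmd P = P(w) - c$ with $c := \lim_{\omega \to -\infty} P(\omega)$ (finite for the bounded quantizers of interest). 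A constant shift preserves monotonicity, compactness of values, and closedness of the graph, so $\tP$ is proximal; and $\Fsf = \Tsf(\,\cdot + c) \circ \tP$, $\Bsf = \Tsf'(\,\cdot + c) \circ \tP$ exhibit both as functions of $\tP$.

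For the ``if'' direction, assume $\tP$ is proximal and $\Fsf = g \circ \tP$, $\Bsf = h \circ \tP$ for some $g, h$. The characterization gives $\tP = \tP_{\rsf}^{\mu}$ for some $\rsf, \mu$, so it remains to produce $\Tsf$ realizing \eqref{eq:FB-cons}. I would set $\Tsf := g$ and show $h = g'$ on the range of $\tP$. Writing the definition of $\tP$ in differential form and substituting both representations gives $\rmd \tP = \tfrac{1}{\Bsf}\,\rmd \Fsf = \tfrac{g'(\tP)}{h(\tP)}\,\rmd \tP$; since the integral must return $\tP$ itself, the ratio $g'(\tP)/h(\tP)$ equals $1$ for $\rmd\tP$-almost every point, forcing $g' = h$ on the effective range of $\tP$. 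Thus $\Fsf = \Tsf \circ \tP$ and $\Bsf = \Tsf' \circ \tP$, as desired.

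I expect the main obstacle to be making the Stieltjes change of variables and the cancellation $\tfrac{1}{\Tsf'(P)}\,\rmd(\Tsf \circ P) = \rmd P$ rigorous, since a proximal $P$ is merely monotone: it may have jump discontinuities (atoms of $\rmd P$) and flat stretches, while $\Tsf$ is only locally Lipschitz, so $\Tsf'$ exists just almost everywhere and must be read as a generalized derivative. At a jump the closed-graph, compact-valued structure fills in the vertical segment, and the atomic contribution has to be evaluated as $\int_{P(w^-)}^{P(w^+)} \tfrac{\Tsf'(p)}{\Tsf'(p)}\,\rmd p = P(w^+) - P(w^-)$ rather than by naive pointwise division. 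More delicately, on a stretch where $\Tsf'$ (hence $\Bsf$) vanishes one has $\rmd \Fsf = 0$ while $\rmd \tP = \rmd P$ need not, so the indeterminate $0/0$ in the integrand must be assigned the value $\rmd P$, not $0$---precisely the convention that makes $\tP = P - c$ emerge. Establishing the integral identity as an equality of measures uniformly across these degeneracies is the technical heart of the proof.
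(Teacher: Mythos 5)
Your proposal is correct and takes essentially the same route as the paper's proof: both directions rest on the identity $\tP'(\omega) = \Fsf'(\omega)/\Bsf(\omega)$ (chain rule, then integration) combined with the characterization of proximal quantizers from \citet{DockhornYEZP21}, with the additive constant absorbed into $\Tsf$. The only cosmetic difference is in the converse, where you set $\Tsf := g$ and deduce $h = g'$ $\rmd\tP$-almost everywhere, while the paper writes $\Bsf = \Tsf' \circ \tP$ and integrates to recover $\Fsf = \Tsf \circ \tP$; your attention to atoms and flat stretches is, if anything, more careful than the paper's own treatment.
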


Importantly, with forward-backward proximal quantizers, the convergence results established by \citet{DockhornYEZP21} for PC directly carries over to PC++ (see \Cref{sec:app-proof} for details). Let us further illustrate the convenience of \Cref{thm:proxquant} by some examples.
\begin{example}[BNN]
\citet{HubaraCSEB16} proposed BNN with the choice
\begin{align}
\Fsf = \sign \quad \mbox{and} \quad \Bsf = \one_{[-1,1]},
\end{align}
which satisfies the decomposition in \eqref{eq:FB-cons}. Indeed, let 
\begin{align}
\label{eq:BNN-T}
\Tsf(w) &= \min\{1, \max\{-1, w\} \}, \\
\label{eq:BNN-P}
\tP_{\rsf}^\mu(w) &= \begin{cases}
\tfrac{1}{\mu}w+\sign(w)(1-\tfrac{1}{\mu}), & \mbox{ if } |w| > 1 \\
\sign(w), & \mbox{ if } |w| \leq 1
\end{cases}
.
\end{align}
Since $\Bsf$ is constant over $[-1,1]$, applying \Cref{thm:proxquant} we deduce that the proximal quantizer $\tP_{\rsf}^\mu$, if exists, must coincide with $\Fsf$ over the support of $\Bsf$. Applying monotonicity of $\tP_{\rsf}^\mu$ we may complete the reverse engineering by making the choice over $|w| > 1$ as indicated above. 
We can easily verify the decomposition in \eqref{eq:FB-cons}:
\begin{align}
\Fsf = \sign = \Tsf \circ \tP_{\rsf}^\mu, ~~ \Bsf = \one_{[-1,1]} = \Tsf' \circ \tP_{\rsf}^\mu.
\end{align}
Thus, 
BNN is exactly BinaryConnect applied to the transformed problem in \eqref{eq:tran-reg}, 
where the transformation $\Tsf$ is the so-called hard tanh in \eqref{eq:BNN-T} while the regularizer $\rsf$ is determined (implicitly) by the proximal quantizer $\tP_{\rsf}^\mu$ in \eqref{eq:BNN-P}.
\end{example}

To our best knowledge, this is the first time the (regularized) objective function that BNN aims to optimize has been identified. The convergence properties of BNN hence follow from the general result of \citet{DockhornYEZP21} on ProxConnect, see  \Cref{sec:app-proof}.

\begin{example}[BNN+]
\citet{DarabiBCN18} adopted the derivative of the sign-Swish (SS) function as a backward quantizer while retaining the sign function as the forward quantizer:
\begin{align}
\Bsf(\wv) &= \nabla \mathrm{SS}(\wv) := \mu[1-\tfrac{\mu\wv}{2} \tanh(\tfrac{\mu\wv}{2})] \tanh'(\tfrac{\mu\wv}{2}),
~~ \Fsf = \sign, 
\end{align}
where $\mu$ is a hyperparameter that controls how well SS approximates the sign. 
Applying \Cref{thm:proxquant} we find that the derivative of SS (as backward) coupled with the sign (as forward) do not admit the decomposition in \eqref{eq:FB-cons}, for any regularizer $\rsf$. Thus, we are not able to find the (regularized) objective function (if it exists) underlying BNN+. 
\end{example}

We conclude that BNN+ cannot be  justified under the framework of PC++. However, it is possible to design a variant of BNN+ that does belong to the PC++ family and hence enjoys the accompanying theoretical properties: 
\begin{example}[BNN++] We propose that a simple fix of BNN+ would be to replace its $\sign$ forward quantizer with the sign-Swish (SS) function:
\begin{align}
\Fsf(\wv) = \mathrm{SS}(\wv) := \tfrac{\mu\wv}{2}  \tanh'(\tfrac{\mu\wv}{2} ) + \tanh(\tfrac{\mu\wv}{2} ),
\end{align}
which is simply the primitive of $\Bsf$. In this case, the algorithm simply reduces to PC++ applied on \eqref{eq:tran-reg} with $\rsf = 0$ (and hence essentially stochastic gradient descent).
Of course, we could also compose with a proximal quantizer to arrive at the pair $(\Fsf\circ\tP_{\rsf}^\mu, \Bsf\circ\tP_{\rsf}^\mu)$, which effectively reduces to PC++ applied on the regularized objective in \eqref{eq:tran-reg} with a nontrivial $\rsf$. 
We call this variant BNN++.
\end{example}

We will demonstrate in the next section that BNN++ is more desirable than BNN+ empirically.

\blue{More generally, we have the following result on designing new forward-backward quantizers:
\begin{restatable}{corollary}{DR}
If the forward quantizer is continuously differentiable  (with bounded support), then one can simply choose the backward quantizer as the derivative of the forward quantizer.
\label{thm:derivative}
\end{restatable}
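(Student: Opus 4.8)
The plan is to verify the hypotheses of \Cref{thm:proxquant} for the pair $(\Fsf,\Bsf)$ with the choice $\Bsf := \Fsf'$; that corollary then immediately returns the decomposition \eqref{eq:FB-cons}. Concretely, it suffices to show (i) the induced map $\tP(w) = \int_{-\infty}^w \tfrac{1}{\Bsf(\omega)}\,\rmd\Fsf(\omega)$ is proximal, and (ii) both $\Fsf$ and $\Bsf$ are functions of $\tP$.

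First I would compute $\tP$ explicitly. Since $\Fsf$ is continuously differentiable we may write $\rmd\Fsf(\omega) = \Fsf'(\omega)\,\rmd\omega$, so wherever $\Bsf = \Fsf' \neq 0$ the integrand collapses to $\Fsf'(\omega)/\Fsf'(\omega) = 1$, while on the zero set of $\Fsf'$ the measure $\rmd\Fsf$ vanishes and contributes nothing, neutralizing the apparent singularity of $1/\Bsf$ there. The bounded-support assumption ensures $\Fsf'$ is supported on a compact interval $[-a,a]$, so the integral stays finite and produces a clamped identity, e.g.\ $\tP(w) = \min\{2a,\max\{0,\,w+a\}\}$ when $\Fsf'>0$ on $(-a,a)$. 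Such a map is continuous and non-decreasing, hence single-valued (a fortiori compact-valued) and with a closed graph: it is proximal. Equivalently, one recognizes $\tP$ as the proximal map $\tP_{\rsf}^{\mu}$ of the trivial regularizer $\rsf = 0$ (i.e.\ the identity), up to the offset induced by the lower integration limit.

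For (ii), observe that $\Fsf$ and $\Bsf=\Fsf'$ are constant precisely on the flat regions of $\tP$ (the complement of $[-a,a]$), whereas on $(-a,a)$ the map $\tP$ is strictly increasing and hence invertible, so $\Fsf$ and $\Fsf'$ factor through $\tP$ consistently; thus both are functions of $\tP$, and \Cref{thm:proxquant} finishes the argument. In fact the conclusion can be seen most transparently without the integral at all: taking $\Tsf := \Fsf$ and $\rsf := 0$ gives $\tP_{\rsf}^{\mu} = \mathrm{id}$, whence \eqref{eq:FB-cons} reads $\Tsf\circ\mathrm{id} = \Fsf$ and $\Tsf'\circ\mathrm{id} = \Fsf' = \Bsf$ — exactly the $\rsf=0$ instance already recorded for BNN++. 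The only delicate point, and the one the smoothness and bounded-support hypotheses exist to handle, is the Stieltjes manipulation across the zero set of $\Bsf=\Fsf'$: there $1/\Bsf$ is undefined, and one must argue carefully that $\rmd\Fsf$ vanishes so that $\tP$ remains finite, monotone, and proximal rather than ill-posed.
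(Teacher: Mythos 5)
Your proposal is correct and takes essentially the same route as the paper, whose entire proof is the one-line observation that with $\Bsf=\Fsf'$ the induced map $\tP(w)\equiv w$ is ``clearly proximal,'' so \Cref{thm:proxquant} applies --- precisely your closing argument with $\Tsf=\Fsf$ and $\rsf=0$. Your extra care on the zero set of $\Fsf'$ (where the paper's literal claim $\tP(w)\equiv w$ glosses over the $0/0$ integrand, and $\tP$ is really a clamped identity under the bounded-support hypothesis) is a refinement of, not a departure from, the paper's argument.
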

This follows from \Cref{thm:proxquant} since $\mathsf{P}(w) \equiv w$ is clearly proximal. Note that the BNN example does not follow from \Cref{thm:derivative}. In \Cref{sec:add_fb}, we provide additional examples of  forward-backward quantizers based on existing methods, and we show that \Cref{thm:derivative} consistently improves previous practices.
}

In summary: (1) ProxConnect++ enables us to design forward-backward quantizers with infinite many choices of $\Tsf$ and $\rsf$, (2) it also allows us to reverse engineer $\Tsf$ and $\rsf$ from existing forward-backward quantizers, which helps us to better understand existing practices, (3) with our theoretical tool, we design a new BNN++ algorithm, which enjoys immediate convergence properties. 
\Cref{fig:prox+} visualizes ProxConnect++ with a variety of forward-backward quantizers.

\section{Experiments}

In this section, we perform extensive experiments to benchmark PC++ on CNN backbone models and the recently advanced vision transformer architectures in three settings: (a) binarizing weights only (BW); (b) binarizing weights and activations (BWA), where we simply apply a similar forward-backward proximal quantizer to the activations; and (c) binarizing weights, activations, with 8-bit dot-product accumulators (BWAA) \citep{NiCCCSG21}.

\begin{table}[t]
    \centering
    \caption{Variants of ProxConnect++.}
    \label{tab:variants}    
    \setlength\tabcolsep{20pt}
    \scalebox{1}{\begin{tabular}{ccl}
\toprule
 
\bf Forward Quantizer & \bf Backward Quantizer & \bf Algorithm\\
\midrule
identity & identity & FP \\
$\tP_{Q}$ & identity & BC \\
$\LP{\rho}$ & identity & PC \\
$\tP_{Q}$ & $\one_{[-1,1]}$ & BNN \\
$\tP_{Q}$  & $\nabla \text{SS}$ & BNN+ \\
$\text{SS}$  & $\nabla \text{SS}$ &  BNN++ \\
\bottomrule
\end{tabular}}
\end{table}

\subsection{Experimental settings}

\begin{table*}[t]
    \centering
    \caption{Binarizing weights (BW), binarizing weights and activation (BWA) and binarizing weights, activation, with 8-bit accumulators  (BWAA) on CNN backbones. We consider the fine-tuning (FT) pipeline and the end-to-end (E2E) pipeline. We compare five variants of ProxConnect++ (BC, PC, BNN, BNN+, and BNN++) with FP, PQ, and rPC in terms of test accuracy. For the end-to-end pipeline,  we omit the results for BWAA due to training divergence and report the mean of five runs with different random seeds.}
    \label{tab:comparison_fq}    
    \setlength\tabcolsep{5pt}
    \scalebox{0.84}{\begin{tabular}{lclcccccccc}
\toprule

\multirow{2}{*}[-.6ex]{\bf Dataset} &  \multirow{2}{*}[-.6ex]{\bf Pipeline} & \multirow{2}{*}[-.6ex]{\bf Task} 
 & \multirow{2}{*}[-.6ex]{FP} & \multirow{2}{*}[-.6ex]{PQ} & \multirow{2}{*}[-.6ex]{rPC} & \multicolumn{5}{c}{\bf ProxConnect++}\\ 
\cmidrule(l{0pt}r{10pt}){7-11}
 & & & &  &  &BC & PC & BNN & BNN+ & BNN++ \\

\midrule
\multirow{5}{*}[-.6ex]{CIFAR-10} & \multirow{3}{*}[-.6ex]{FT} & BW & 92.01\% & 89.94\% & 89.98\% & 90.31\% & 90.31\% & 90.35\% & 90.27\% & \bf 90.40\% \\

& &  BWA & 92.01\% & 88.79\% & 83.55\% & 89.39\% & 89.95\% & 90.01\% & 89.99\% & \bf 90.22\% \\

& & BWAA &  92.01\% & 85.39\% & 81.10\% & 89.11\% & 89.21\% & 89.32\% & 89.55\% &\bf 90.01\%\\

\cmidrule(l{0pt}r{10pt}){2-11}
& \multirow{2}{*}[-.6ex]{E2E}  & BW & 92.01\% & 81.59\% & 81.82\% & 87.51\% & 88.05\% & 89.92\% & 89.39\% & \bf 90.03\% \\

& & BWA & 92.01\% & 81.51\% & 81.60\% & 86.99\% & 87.26\% & 89.15\% & 89.02\% & \bf 89.91\% \\

\midrule
\multirow{5}{*}[-.6ex]{ImageNet-1K} & \multirow{3}{*}[-.6ex]{FT} & BW & 78.87\% & 66.77\% & 69.22\% & 71.35\% & 71.29\% & 71.41\% & 70.22\% & \bf 72.33\%\\

& & BWA & 78.87\% & 56.21\% & 58.19\% & 65.99\% & 65.61\% & 66.02\% & 65.22\% & \bf 68.03\%  \\

& & BWAA & 78.87\% & 53.29\% & 55.28\% & 58.18\% & 59.21\% & 59.77\% & 59.10\% & \bf 63.02\% \\
\cmidrule(l{0pt}r{10pt}){2-11}
& \multirow{2}{*}[-.6ex]{E2E}  & BW & 78.87\% & 63.23\% & 66.39\% & 67.45\% & 67.51\% & 67.49\% & 66.99\% & \bf 68.11\%  \\
& & BWA & 78.87\% & 61.19\% & 64.17\% & 65.42\% & 65.31\% & 65.29\% & 65.98\% & \bf 66.08\%  \\
\bottomrule
\end{tabular}}
\end{table*}

\noindent \textbf{Datasets}: We perform image classification on CIFAR-10/100 datasets \citep{KrizhevskyH09} and ImageNet-1K dataset \citep{KrizhevskySH12}. Additional details on our experimental setting can be found in \Cref{app:exp-s}.

\noindent \textbf{Backbone architectures:}
(1) \emph{CNNs}: we evaluate CIFAR-10 classification using ResNet20 \citep{HeZRS16}, and ImageNet-1K with ResNet-50 \citep{HeZRS16}. We consider both fine-tuning and end-to-end training; (2) \emph{Vision transformers}: we further evaluate our algorithm on two popular vision transformer models: ViT \citep{DosovitskiyBKWZUDMHG20} and DeiT \citep{TouvronCDMSJ21}. For ViT, we consider ViT-B model and fine-tuning task across all models\footnote{Note that we use pre-trained models provided by \citet{DosovitskiyBKWZUDMHG20} on the ImageNet-21K/ImageNet-1K for fine-tuning ViT-B model on the ImageNet-1K/CIFAR datasets, respectively.}. For DeiT, we consider DeiT-B, DeiT-S, and DeiT-T, which consist of 12, 6, 3 building blocks and 768, 384 and 192 embedding dimensions, respectively; we consider fine-tuning task on ImageNet-1K pre-trained model for CIFAR datasets and end-to-end training on ImageNet-1K dataset.

\noindent \textbf{Baselines}:
For ProxConnect++, we consider the 6 variants in \Cref{tab:variants}. With different choices of the forward quantizer $\Fsf^{\mu}_{\rsf}$ and the backward quantizer $\Bsf^{\mu}_{\rsf}$, we include the full precision (FP) baseline and  5 binarization methods: BinaryConnect (BC) \citep{CourbariauxBD15}, ProxConnect (PC) \citep{DockhornYEZP21}, Binary Neural Network (BNN) \citep{HubaraCSEB16}, the original BNN+ \citep{DarabiBCN18}, and the modified BNN++ with $\Fsf^{\mu}_{\rsf} = \text{SS}$. Note that we linearly increase $\mu$ in BNN++ to achieve full binarization in the end.
We also compare ProxConnect++ with the ProxQuant and reverseProxConnect baselines.
    
\noindent\textbf{Hyperparameters}: We apply the same training hyperparameters and fine-tune/end-to-end training for 100/300 epochs across all models. For binarization methods: 
(1) PQ (ProxQuant): similar to \citet{BaiWL18}, we apply the LinearQuantizer (LQ), see \eqref{eq:prox-pl} in \Cref{app:pq}, with initial $\rho_0 = 0.01$ and linearly increase to $\rho_{T} = 10$;
(2) rPC (reverseProxConnect): we use the same LQ for rPC;
(3) ProxConnect++: for PC, we apply the same LQ; for BNN+, we choose $\mu = 5$ (no need to increase $\mu$ as the forward quantizer is $\sign$); for BNN++, we choose $\mu_0 = 5$ and linearly increase to $\mu_T = 30$ to achieve binarization at the final step.

Across all the experiments with random initialization, we report the mean of three runs with different random seeds. Furthermore, we provide the complete results with error bars in \Cref{app:error}.

\subsection{CNN as backbone}

We first compare PC++ against baseline methods on various tasks employing CNNs:
\begin{enumerate}[leftmargin=*,label={(\arabic*)}]
\item Binarizing weights only (BW), where we simply binarize the weights and keep the other components (i.e., activations and accumulations) in full precision. 
\item Binarizing weights and activations (BWA), while keeping accumulation in full precision. Similar to the weights, we apply the same forward-backward proximal quantizer to binarize activations. 
\item Binarizing weights, activations, with 8-bit accumulators (BWAA). BWAA is more desirable in certain cases where the network bandwidth is narrow, e.g., in homomorphic encryption. To achieve BWAA, in addition to quantizing the weights and activations,
we follow the implementation of WrapNet \citep{NiCCCSG21} and quantize the accumulation of each layer with an additional cyclic function. In practice, we find that with 1-bit weights and activations, the lowest bits we can successfully employ to quantize accumulation is 8, while any smaller choice would raise a high overflow rate and cause the network to diverge. Moreover, BWAA highly relies on a good initialization and cannot be successfully trained end-to-end in our evaluation (and hence omitted). 
\end{enumerate}

Note that for the fine-tuning pipeline, we initialize the model with their corresponding pre-trained full precision weights. For the end-to-end pipeline, we utilize random initialization.
We report our results in \Cref{tab:comparison_fq} and observe: 
(1) the PC family outperforms baseline methods (i.e., PQ and rPC), and achieves competitive performance on both small and larger scale datasets; (2) BNN++ performs consistently better and is more desirable among the five variants of PC++, especially on BWA and BWAA tasks. Its advantage over BNN+ further validates our theoretical guidance.

\begin{table}[!t]
    \centering
    \caption{Our results on binarizing vision transformers (binarizing weights only). We compare five variants of ProxConnect++ (BC, PC, BNN, BNN+, and BNN++) with FP, PQ, and rPC. End-to-end training tasks are marked as {\bf bold} (i.e., ImageNet-1K for DeiT-T/S/B), where the results are the mean of \blue{five} runs with different random seeds.}
    \label{tab:comparison}    
    \setlength\tabcolsep{5pt}
    \scalebox{0.85}{\begin{tabular}{cccccccccc}
\toprule

\multirow{2}{*}[-.6ex]{\bf Model} & \multirow{2}{*}[-.6ex]{\bf Dataset} 
 & \multirow{2}{*}[-.6ex]{FP} & \multirow{2}{*}[-.6ex]{PQ} & \multirow{2}{*}[-.6ex]{rPC} & \multicolumn{5}{c}{\bf ProxConnect++}\\ 
\cmidrule(l{0pt}r{10pt}){6-10}
 & & & &   &BC & PC & BNN & BNN+ & BNN++ \\

\midrule
\multirow{3}{*}[-.6ex]{ViT-B} & CIFAR-10 & 98.13\% & 85.06\% & 86.22\% & 87.97\% & 90.12\% & 89.08\% & 88.12\% & \bftab{90.24\%} \\

& CIFAR-100 & 87.14\% & 72.07\% & 73.52\% & 76.35\% & 78.13\% & 77.23\% & 77.10\% & \bftab{79.22\%}\\

& ImageNet-1K & 77.91\% & 57.65\% & 55.33\% & 63.24\% & \bftab{66.33\%} & 65.31\% & 63.55\% & \bf 66.33\% \\
\midrule
\multirow{3}{*}[-.6ex]{DeiT-T} & CIFAR-10 & 94.86\% & 82.76\% & 82.25\% & 83.10\% & 85.15\% & 86.12\% & 85.91\% & \bf 86.41\%\\
& CIFAR-100 & 72.37\% & 54.55\% & 55.66\% & 59.65\% & 60.15\% & 60.06\% & 59.77\% & \bftab{60.33\%}\\
&\bf ImageNet-1K & 72.20\% & 61.23\% & 60.35\% & 63.22\% & 66.15\% & 65.00\% & 66.67\% & \bftab{67.34\%} \\
\midrule
\multirow{3}{*}[-.6ex]{DeiT-S} & CIFAR-10 & 95.10\% & 81.67\% & 80.23\% & 84.85\% & 85.13\% & 85.09\% & 85.16\% & \bftab{86.19\%}\\
& CIFAR-100 & 73.19\% & 45.55\% & 46.66\% & 60.12\% & 61.59\% & 60.55\% & 60.17\% & \bftab{62.98\%} \\
& \bf ImageNet-1K & 79.91\% & 69.87\% & 68.74\% & 73.16\% & 73.51\% & 73.77\% &73.23\% & \bf 73.53\%  \\
\midrule
\multirow{3}{*}[-.6ex]{DeiT-B} & CIFAR-10 & 98.72\% & 85.22\% & 86.35\% & 88.95\% & 90.53\% & 90.21\% & 89.03\% & \bftab{90.67\%} \\
& CIFAR-100 & 86.65\% & 72.11\% & 73.40\% & 75.40\% & \bftab{78.55\%} & 76.22\% & 76.51\% & 78.30\% \\
&\bf ImageNet-1K & 81.81\% & 72.54\% & 70.11\% & 76.55\% & 76.61\% & 75.60\% & 76.63\% & \bftab{76.74\%} \\
\bottomrule
\end{tabular}}
\end{table}

\begin{table*}[t]
    \centering
    \caption{Results on binarizing vision transformers (BW, BWA, and BWAA) on DeiT-T. We compare 5 variants of ProxConnect++ (BC, PC, BNN, BNN+, and BNN++) with FP, PQ, and rPC. End-to-end training tasks are marked as {\bf bold} (i.e., ImageNet-1K), where we omit the results for BWAA due to training divergence and the reported results are the mean of five runs with different random seeds.}
    \label{tab:comparison_vit_fq}    
    \setlength\tabcolsep{5pt}
    \scalebox{0.85}{\begin{tabular}{llcccccccc}
\toprule

 \multirow{2}{*}[-.6ex]{\bf Dataset}  &  \multirow{2}{*}[-.6ex]{\bf Task} 
 & \multirow{2}{*}[-.6ex]{FP} & \multirow{2}{*}[-.6ex]{PQ} & \multirow{2}{*}[-.6ex]{rPC} & \multicolumn{5}{c}{\bf ProxConnect++}\\ 
\cmidrule(l{0pt}r{10pt}){6-10}
 & & & &   &BC & PC & BNN & BNN+ & BNN++ \\

\midrule
\multirow{3}{*}[-.6ex]{CIFAR-10} &  BW & 94.85\% & 82.76\% & 82.25\% & 83.10\% & 
85.15\% & 86.12\% & 85.91\% & \bf 86.41\%\\
 &  BWA & 94.85\% & 82.56\% & 82.02\% & 82.89\% & 
85.01\% & 85.99\% & 85.66\% & \bf 86.12\%\\
 &  BWAA & 94.85\% & 81.34\% & 80.97\% & 82.08\% & 84.31\% & 84.87\% & 84.72\% & \bf 85.31\%\\

\midrule
 \multirow{3}{*}[-.6ex]{CIFAR-100} &  BW & 72.37\% & 54.55\% & 55.66\% & 59.65\% & 60.15\% & 60.06\% & 59.77\% & \bftab{60.33\%}\\
 &  BWA & 72.37\% & 53.77\% & 54.98\% & 59.21\% & 59.71\% & 59.66\% & 59.12\% & \bftab{59.85\%}\\
& BWAA & 72.37\% & 52.15\% & 54.36\% & 58.15\% & 59.01\% & 58.72\% & 58.15\% & \bftab{59.06\%}\\

\midrule
\multirow{2}{*}[-.6ex]{\bf ImageNet-1K}& BW & 72.20\% & 61.23\% & 60.35\% & 63.23\% & 66.15\% & 65.00\% & 66.67\% & \bftab{67.34\%} \\
& BWA & 72.20\% & 60.01\% & 58.77\% & 62.13\% & 65.29\% & 63.75\% & 65.29\% & \bftab{65.65\%} \\
\bottomrule
\end{tabular}}
\end{table*}

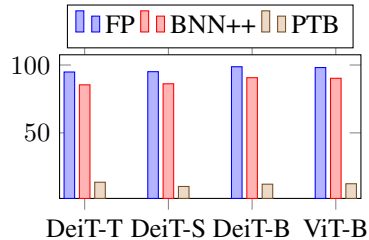
\begin{wrapfigure}{R}{0.38\textwidth}%
\vspace{-4pt}
\centering
\pgfplotstableread[row sep=\\,col sep=&]{
    interval & carT & carD & carR \\
    DeiT-T    & 94.85  & 85.40  & 13.66  \\
    DeiT-S     & 95.09 & 86.17  & 10.55 \\
    DeiT-B    & 98.72 & 90.66 & 12.22 \\
    ViT-B   & 98.13 & 90.22 & 12.55 \\
    }\mydata

\begin{tikzpicture}
    \begin{axis}[
            ybar=1.8pt,
            bar width=4pt,
            width=0.4\textwidth,
            height=.25\textwidth,
            legend style={at={(0.5,1.35)},
             anchor=north,legend columns=-1},
            symbolic x coords={DeiT-T,DeiT-S,DeiT-B,ViT-B},
            xtick = data
        ]
        \addplot table[x=interval,y=carT]{\mydata};
        \addplot table[x=interval,y=carD]{\mydata};
        \addplot table[x=interval,y=carR]{\mydata};
        \legend{FP, BNN++, PTB}
    \end{axis}
\end{tikzpicture}
\caption{Comparison between Full Precision (FP) model, BNN++, and Post-training Binarization (PTB) on the fine-tuning task on CIFAR-10.}
\label{fig:PTB}
\end{wrapfigure}

\begin{table}[t]
    \centering
    \caption{Ablation study on the effect of the scaling factor, normalization, pre-training, and knowledge distillation. Experiments are performed on CIFAR-10 with ViT-B.}
    \label{tab:ablation}    
    \setlength\tabcolsep{15pt}
    \scalebox{0.8}{\begin{tabular}{cccccr}
\toprule
Method & Scaling & Normalization & Pre-train & KD &  Accuracy \\
\midrule
\multirow{4}{*}[-.6ex]{PC} & \xmark & \xmark & \xmark & \xmark & 0.10\% \\
 & \cmark & \xmark  & \xmark & \xmark & 12.81\% \\
 & \cmark & \cmark & \xmark& \xmark &  66.51\% \\
 & \cmark & \cmark & \cmark& \xmark & 88.53\% \\
 & \cmark & \cmark & \cmark& \cmark &  90.13\% \\
 
 \midrule
 
\multirow{4}{*}[-.6ex]{BNN++} & \xmark & \xmark &  \xmark & \xmark & 1.50\% \\
 & \cmark & \xmark  & \xmark &  \xmark &  23.55\% \\
 & \cmark & \cmark  & \xmark& \xmark & 77.22\% \\
 & \cmark & \cmark & \cmark& \xmark &  89.05\% \\
 & \cmark & \cmark & \cmark& \cmark &  90.22\% \\
\bottomrule
\end{tabular}}
\end{table}

\subsection{Vision transformer as backbone}
Next, we perform similar experiments on the three tasks on vision transformers.

\noindent\textbf{Implementation on vision transformers}: While network binarization is popular for CNNs, its application for vision transformers is still rare\footnote{Notably, \citet{HeLZWZZ22} also consider binarizing vision transformers, which we compare our implementation details and experimental results against in \Cref{app:vit}.  }. Here we apply four protocols for implementation: 

(1) We keep the mean $s_{n}$ of full precision weights $\wv_{n}^*$  for each layer $n$ as a scaling factor (can be thus absorbed into $\Fsf_{\rsf}^{\mu_t}$) for the binary weights $\wv_{n}$. Such an approach keeps the range of $\wv_{n}^*$ during binarization and significantly reduces training difficulty  without additional computation.\\[.3em]
(2) For binarized vision transformer models, LayerNorm is important to avoid gradient explosion. Thus, we add one more LayerNorm layer at the end of each attention block.
\\[.3em]
(3) When fine-tuning a pre-trained model (full precision), the binarized vision transformer usually suffers from a bad initialization. Thus, a few epochs of pre-training on the binarized vision transformer is extremely helpful and can make fine-tuning much more efficient and effective.\\[.3em]
(4) We apply the knowledge distillation technique in BiBERT \citep{QinDZYLDLL22} to boost the performance. We use full precision pre-trained models as the teacher model.

\noindent\textbf{Main Results}:
We report the main results of binarizing vision transformers in \Cref{tab:comparison} (BW) and \Cref{tab:comparison_vit_fq} (BW, BWA, BWAA), where we compare ProxConnect++ algorithms with the FP, PQ, and rPC baselines on fine-tuning and end-to-end training tasks. We observe that: 
(1) ProxConnect++ variants generally outperform PQ and rPC and are able to binarize vision transformers with less than $10\%$ accuracy degradation on the BW task. In particular, for end-to-end training, the best performing ProxConnect++ algorithms achieve $\approx 5\%$ accuracy drop;
(2) Among the five variants, we confirm BNN++ is also generally better overall for vision transformers. This provides evidence that our \Cref{thm:proxquant} allows practitioners to easily design many and choose the one that performs best empirically; 
(3) With a clear underlying optimization objective, BNN++ again outperforms BNN+ across all tasks, which empirically verifies our theoretical findings on vision transformers; (4) In general, we find that weight binarization achieves about 30x reduction in memory footprint, e.g., from 450 MB to 15 MB for ViT-B.

\begin{figure}
\centering
\includegraphics[width=0.5\textwidth]{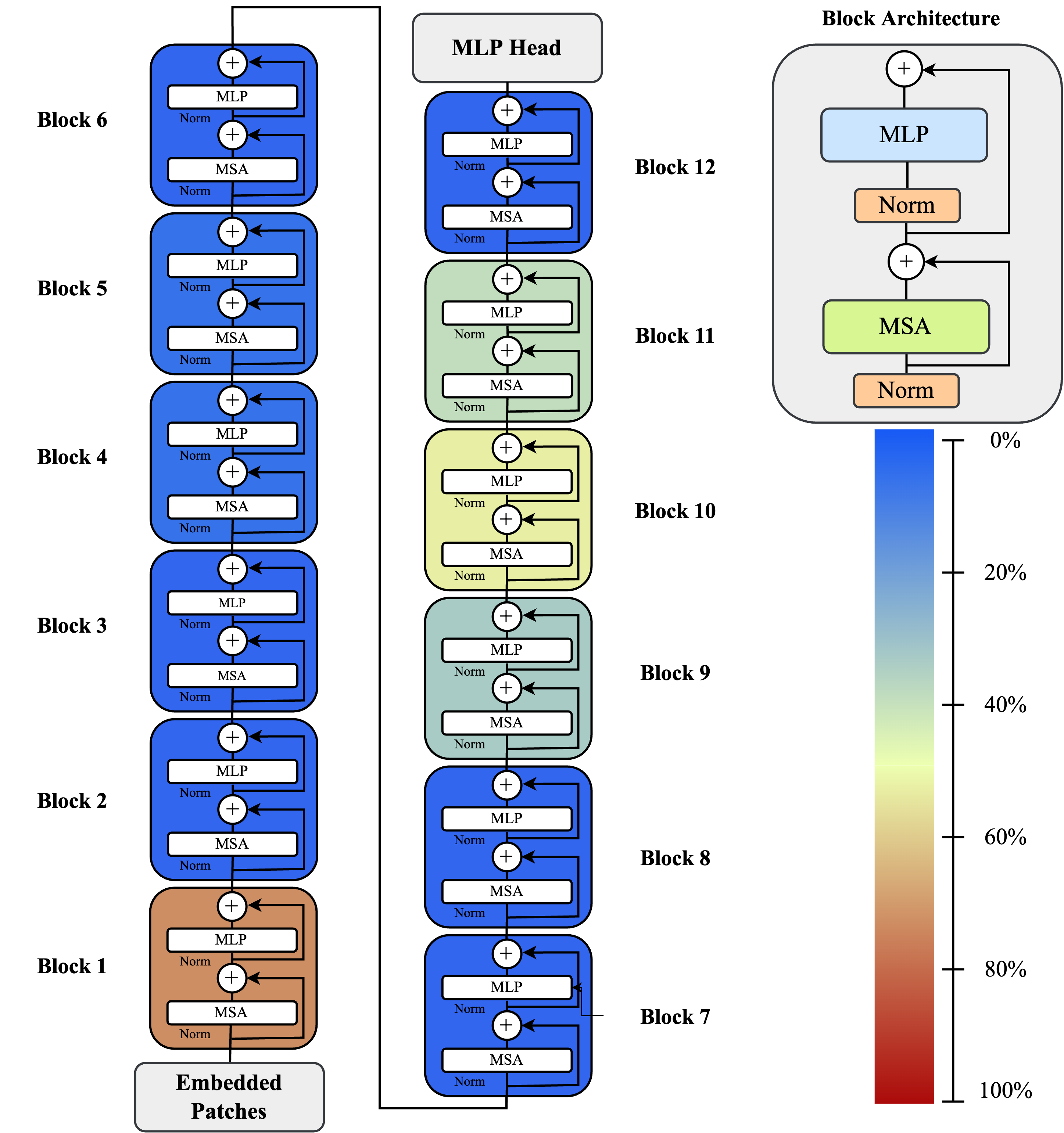}
\caption{Results of binarizing different components (blocks) of ViT-B architecture on CIFAR-10. Warmer color indicates significant accuracy degradation after binarization.}
\label{fig:effect_blocks}
\end{figure}
    
\noindent\textbf{Ablation Studies}: 
We provide further ablation studies to gain more insights and verify our binarization protocols for vision transformers.

\emph{Post-training Binarization}: in  \Cref{fig:PTB}, we verify the difference between PTB (post-training binarization) and BAT (binarization-aware training) on the fine-tuning task on CIFAR-10 across different models. Note that we use BNN++ as a demonstration of BAT. We observe that without optimization during fine-tuning, the PTB approach fails in general, thus confirming the importance of considering BAT for vision transformers.

\emph{Effect of binarizing protocols}: here we show the effect of the four binarizing protocols mentioned at the beginning, including scaling the binarized weights using the mean of full precision weights (scaling), adding additional LayerNorm layers (normalization), BAT on the full precision pre-trained models (pre-train) and knowledge distillation. We report the results in \Cref{tab:ablation} and confirm that each protocol is essential to binarize vision transformers successfully.

\emph{Which block should one binarize}: lastly, we visualize the sensitivity of each building block to binarization in vision transformers (i.e., ViT-B) on CIFAR-10 in  \Cref{fig:effect_blocks}. We observe that binarizing blocks near the head and the tail of the architecture causes a significant accuracy drop. 

\section{Conclusion }

In this work we study the popular \emph{approximate gradient} approach in neural network binarization. 
By generalizing ProxConnect and proposing PC++, we provide a principled way to understand forward-backward quantizers and cover most existing binarization techniques as special cases. Furthermore, PC++ enables us to easily design the desired quantizers (e.g., the new BNN++) with automatic theoretical guarantees. We apply PC++ to CNNs and vision transformers and compare its variants in extensive experiments. We confirm empirically that PC++ overall achieves competitive results, whereas BNN++ is generally more desirable. 

\section*{Broader impacts and limitations} 
We anticipate our work to further enable training and deploying advanced machine learning models to resource limited devices and environments, and help reducing energy consumption and carbon footprint at large. We do not foresee any direct negative societal impact. 
One limitation we hope to address in the future is to build a theoretical framework that will allow practitioners to quickly evaluate different forward-backward quantizers for a variety of applications.


\begin{ack}
We thank the reviewers and the area chair for thoughtful comments that have improved our final draft. 
We thank Arash Ardakani, Ali Mosleh and Marzieh Tahaei for their early participation in this project.
YY gratefully acknowledges NSERC and CIFAR for funding support.
\end{ack}

\printbibliography[title=References]

\newpage
\clearpage

\appendix
\begin{center}
\Large
\bf
Appendix for \emph{\titlename}
\end{center}

\section{More on Proximal Quantizers}

\label{app:pq}

\citet{DockhornYEZP21} gave a complete characterization of the proximal quantizer $\tP_{\rsf}$: a (multi-valued) mapping $\tP$ is a proximal quantizer (of some underlying regularizer $\rsf$) iff it is monotone, compact-valued and with a closed graph. 
We now give a few examples to illustrate the ubiquity of proximal quantizers, as well as the generality of PC:
\begin{itemize}
\item Identity function: apparently, choosing $\tP_{\textsf{r}}^{\mu_{t}}$ as the identity function recovers the full precision training. 
\item $\tP_{\textsf{r}}^{\mu_{t}} = \tP_{Q}$: as $Q = \{\pm1\}$, this choice recovers exactly BC in \eqref{eq:BC}.
\item $\tP_{\textsf{r}}^{\mu_{t}} = \LP{\rho}$: This is the general piecewise linear quantizer designed by
\citet{DockhornYEZP21}. Recall that $Q = \{q_k\}_{k=1}^{2}$, where $q_1 = -1, q_2 = +1$, such that $p_2 =0 $ is the middle point. By introducing two parameters $\rho, \varrho \geq 0$, we can define two shifts:
\begin{alignat}{3}
& \!\!\!\mbox{horizontal:} 
&&\!\!\!\!\! q_1^- = q_1, q_1^+ = p_2 \wedge (q_1 + \rho)\\
&\!\!\!\mbox{}&&\!\!\!\!\! q_2^-= p_2 \vee (q_2 - \rho), q_2^+ = q_2\\
&\!\!\!\mbox{vertical:} \qquad 
&& \!\!\!\!\! p_2^{-} = q_1 \vee (p_2 \!-\! \varrho), p_2^+ = q_2 \wedge (p_2 \!+\! \varrho).
\end{alignat}
Then, we define $\LP{\rho}$ as the piece-wise linear map (that simply connects the points by straight lines):
\begin{align}
\scriptsize
\label{eq:prox-pl}
\LP{\rho}(w^*) \!=\! \begin{cases}
q_1, & \mbox{if } q_1^- \leq w^* \leq q_1^+ \\
q_1 + (w^*-q_1^+) \tfrac{p_2^- \!-\! q_1}{p_2 - q_1^+}, & \mbox{if } q_1^+ \leq w^* < p_2 \\
p_2^+ + (w^*\!-\!p_2) \tfrac{q_2 - p_2^+}{q_2^- - p_2} & \mbox{if } p_2 < w^* \leq q_2^-\\
q_2, & \mbox{if } q_2^- \leq w^* \leq q_2^+
\end{cases}
.
\end{align}
For the middle points, $\LP{\rho}(w^*)$ can be regarded as the intermediate state between the identity function and $\tP_{Q}$ such that, where $\LP{\rho}(w^*)$ may take any value within the two limits. Note that $\rho$ controls the discretization vicinity, such that in practice, $\rho$ is linearly increased over time to fulfill binary weights in the end. We visualize examples of $\LP{\rho}(w^*)$ in  \Cref{fig:prox_op}.
\end{itemize}

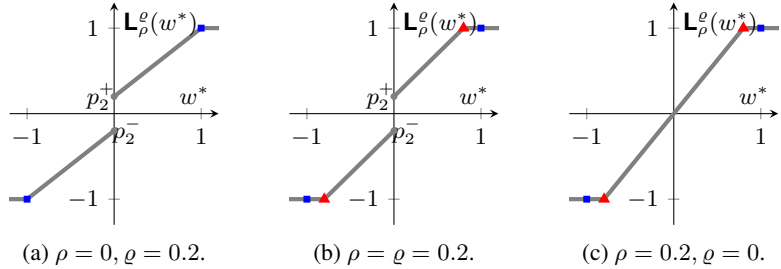
\begin{figure*}[ht]
    \centering
    \begin{subfigure}[b]{0.26\textwidth}
        \centering
        \begin{tikzpicture}
    [declare function={
        func1(\x) = -1.0;
        func2(\x) = (\x <= 0 ) * (0.8*\x -0.2);
        func3(\x) = and(\x >= 0, \x <= 1) * (0.8*\x + 0.2);
        func4(\x) = 1.0;}]
       
    \begin{axis}
        [width=1.2\linewidth,
         height=1.25\linewidth,
         axis x line=middle, 
         axis y line=middle,
         ymin=-1.3, ymax=1.3, ytick={-1,...,1}, ylabel=$\LP{\rho}(w^*)$,
         xmin=-1.2, xmax=1.2, xtick={-1,...,1}, xlabel=$w^*$,
         samples=101, every axis plot/.append style={ultra thick},
         soldot/.style={color=gray,only marks,mark=*, mark size=0.5pt},
         holdot/.style={color=red,only marks,mark=triangle, mark size=1pt},
         voldot/.style={color=blue,only marks,mark=square, mark size=0.5pt},
         label style={font=\small},
         tick label style={font=\small}]
           
        \addplot[domain=-1.7:-1.0, gray] {func1(x)}; 
        \addplot[domain=-1.0:0, gray] {func2(x)};
        \addplot[domain=0:1, gray] {func3(x)};
        \addplot[domain=1:1.7, gray] {func4(x)};
        
        \addplot[soldot]coordinates{(0,-0.2)(0 ,0.2)};
        \addplot[voldot]coordinates{(-1,-1)(1, 1)};

        \node (p2minus) at (axis cs:0.15, -0.2) {\small $p_2^-$};
        \node (p2plus) at (axis cs:-0.15, 0.2) {\small $p_2^+$};
    \end{axis}
\end{tikzpicture} 
        \caption{$\rho=0, \varrho = 0.2$.}
    \end{subfigure}
    \begin{subfigure}[b]{0.26\textwidth}
        \centering
        \begin{tikzpicture}
    [declare function={
        func1(\x) = -1.0;
        func2(\x) = ( \x <= 0 ) * (\x - 0.2);
        func3(\x) = and(\x >= 0, \x <= 0.8) * (\x + 0.2);
        func4(\x) = 1.0;}]
       
    \begin{axis}
        [width=1.2\linewidth,
         height=1.25\linewidth,
         axis x line=middle, 
         axis y line=middle,
         ymin=-1.3, ymax=1.3, ytick={-1,...,1}, ylabel=$\LP{\rho}(w^*)$,
         xmin=-1.2, xmax=1.2, xtick={-1,...,1}, xlabel=$w^*$,
         samples=101, every axis plot/.append style={ultra thick},
         soldot/.style={color=gray,only marks,mark=*, mark size=0.5pt},
         holdot/.style={color=red,only marks,mark=triangle, mark size=1pt},
         voldot/.style={color=blue,only marks,mark=square, mark size=0.5pt},
         label style={font=\small},
         tick label style={font=\small}]
           
        \addplot[domain=-1.7:-0.8, gray] {func1(x)}; 
        \addplot[domain=-0.8:0, gray] {func2(x)};
        \addplot[domain=0:0.8, gray] {func3(x)};
        \addplot[domain=0.8:1.7, gray] {func4(x)};
        
        \addplot[soldot]coordinates{(0,-0.2)(0 ,0.2)};
        \addplot[holdot]coordinates{(-0.8,-1)(0.8, 1)};
        \addplot[voldot]coordinates{(-1,-1)(1, 1)};

        \node (p2minus) at (axis cs:0.15, -0.2) {\small $p_2^-$};
        \node (p2plus) at (axis cs:-0.15, 0.2) {\small $p_2^+$};
    \end{axis}
\end{tikzpicture} 
        \caption{$\rho=\varrho = 0.2$.}
    \end{subfigure}
    \begin{subfigure}[b]{0.26\textwidth}
        \centering
        \begin{tikzpicture}
    [declare function={
        func1(\x) = -1.0;
        func2(\x) = (\x <= 0 ) * (1.25*\x);
        func3(\x) = and(\x >= 0, \x <= 0.8) * (1.25*\x );
        func4(\x) = 1.0;}]
       
    \begin{axis}
        [width=1.2\linewidth,
         height=1.25\linewidth,
         axis x line=middle, 
         axis y line=middle,
         ymin=-1.3, ymax=1.3, ytick={-1,...,1}, ylabel=$\LP{\rho}(w^*)$,
         xmin=-1.2, xmax=1.2, xtick={-1,...,1}, xlabel=$w^*$,
         samples=101, every axis plot/.append style={ultra thick},
         soldot/.style={color=gray,only marks,mark=*, mark size=0.5pt},
         holdot/.style={color=red,only marks,mark=triangle, mark size=1pt},
         voldot/.style={color=blue,only marks,mark=square, mark size=0.5pt},
         label style={font=\small},
         tick label style={font=\small}]
           
        \addplot[domain=-1.7:-0.8, gray] {func1(x)}; 
        \addplot[domain=-0.8:0, gray] {func2(x)};
        \addplot[domain=0:0.8, gray] {func3(x)};
        \addplot[domain=0.8:1.7, gray] {func4(x)};
        
        \addplot[voldot]coordinates{(-1,-1)(1,1)};
        \addplot[holdot]coordinates{(-0.8,-1)(0.8, 1)};
        
    \end{axis}
\end{tikzpicture} 
        \caption{$\rho=0.2, \varrho = 0$.}
    \end{subfigure}
    \caption{Different instantiations of the proximal map $\LP{\rho}$ in \eqref{eq:prox-pl} for $Q = \{-1, 1\}$. }
    \vspace{-0em}
    \label{fig:prox_op}
\end{figure*}

\section{Related works}
\label{sec:rel}
\paragraph{Vision Transformer.} In computer vision,  vision transformers have become one of the most popular backbone architectures.  \citet{DosovitskiyBKWZUDMHG20} is the first to modify the transformer model to enable images as input, namely the ViT model. Specifically, \citet{DosovitskiyBKWZUDMHG20} translates an image to a sequence of flattened image patches as input, and applies a self-attention mechanism to retrieve patch-wise information in the feature representation.
\citet{TouvronCDMSJ21} further equips ViT with knowledge distillation and proposes DeiT that generalizes well on smaller models and datasets.
\citet{LiuLCHWZLG21} further proposes Swin as a hierarchical vision transformer that computes representation with shifted windows.

In vision transformers, the main computation overhead is the multi-head attention layer, whose cost is quadratic with the length of the image patches. As a result, such models are in general expensive to train. To reduce the computational cost, different compression techniques have been explored. For instance, \citet{PanPJWFO21} performs dynamic pruning for less important patches; \citet{BhojanapalliCVLJLCK21} reuses attention scores computed for one layer in multiple building blocks; \citet{HouK21} applies multi-dimensional model compression.  In this paper, we focus on an alternative approach, namely network quantization.

\paragraph{Network Quantization.} We consider two possible scenarios of network quantization:

 (1) Post-training Quantization:
We first discuss the easier post-training quantization methods. Such approaches usually quantize the full-precision pre-trained model and directly apply it for inference. 
Post-training quantization  is widely used in CNNs \citep{BannerNS19,WangCHC20, NagelBBW19, CaiYDGMK20, AlizadehBVLBW20, NagelAVLB20, LiGTYHZYWG21}. 
\citet{LiuWHZMG21} is the first to explore PTQ for vision transformers. It optimizes the quantization intervals and considers ranking information in the loss function. However, it only considers quantization to a 6-bit model without severe performance degradation. For lower-bit quantization, it is essential to leverage training.

(2) Quantization-Aware Training (QAT): different from post-training quantization, quantization-aware training leverage quantization during pre-training or fine-tuning. Thus it can be formulated as an optimization problem for learning the optimal quantized weights  \citep{GuptaAKP2015, JacobKCZTHAK18, LouizosRBGW19, ChoiWVCSG18, JungSLSHKH19, JainGWD19, EsserMBAM19, ZhaoWCLZ19, BhalgatLNBK20}.   Compared with PTQ, QAT can obtain less accuracy drop in low-bit quantization compared to the full-precision model. 
\citet{LiYWC22} and \citet{XuLMZZGL22} demonstrate that QAT requires a unique design to quantize vision transformers and it is possible to perform quantization to 3 bit without severe performance degradation. \citep{HeLZWZZ22} further performs binarization with softmax-aware binarization and information preservation.

\blue{\paragraph{Binarization Techniques:}
(1) Here we summarize existing binarization approaches that can be either justified or improved by PC++:
Some existing implementations \cite{LiuWLYLC18,QinGLSWYS20,LinJXZWWHL20} set the forward quantizer as $\text{sign}$ and design the backward quantizer in an ad hoc fashion (based on graphic approximation of the sign function), e.g., \citet{LiuWLYLC18} applies a piece-wise polynomial approximation; \citet{LinJXZWWHL20} improves \cite{LiuWLYLC18} with a dynamic polynomial approximation; \citet{QinGLSWYS20} designs a dynamic error decay estimator based on the $\text{tanh}$ function. These methods cannot be justified with PC++, but could be improved by designing new forward quantizers using our \Cref{thm:derivative}. We discuss these new variants in \Cref{sec:add_fb} and compare them with our methods. Other implementations \cite{TuCRW22,LiuSSC20} applies shift transformation on both forward and backward quantizers, which belongs to the PC++ family.}

\blue{
(2) Architecture design that can be further integrated into our PC++ as future work: \citet{XuLLCSGTJ21} designs a rectified clamp unit to address "dead weights"; \citet{RastegariORF16} applies the absolute mean of weights and activations; \citet{MartinezYBT19} uses real-to-binary attention matching and data-driven channel re-scaling; \citet{KimPLK21} proposes a shifted activation function.}

\blue{
(3) Optimization refinement, again could be integrated into our framework as future work: \citet{LiuCZSZD21} utilizes state-aware gradient state; \citet{LiuSLHHC21} provides a weight decay scheme; and \citet{HelwegenWGLCN19} proposes Bop, a new optimizer for BNNs.
}


\section{Additional Theoretical Results}
\label{sec:app-proof}
\FB*
\begin{proof}
We first recall the decomposition in \eqref{eq:FB-cons}:
\begin{align}
\tag{19}
\Fsf^{\mu}_{\rsf} := \Tsf \circ \tP_{\rsf}^{\mu}, \quad
\Bsf^{\mu}_{\rsf} := \Tsf' \circ \tP_{\rsf}^{\mu}.
\end{align}
Suppose first that $(\Fsf, \Bsf)$ satisfies the above decomposition. Clearly, both $\Fsf$ and $\Bsf$ are functions of $\tP = \tP_{\rsf}^\mu$. Moreover, 
\begin{align}
\frac{\Fsf'(\omega)}{\Bsf(\omega)} = \frac{{\tP_{\rsf}^\mu}'(\omega) \cdot \Tsf'\circ \tP_{\rsf}^\mu}{\Tsf'\circ \tP_{\rsf}^\mu} = {\tP_{\rsf}^\mu}'(\omega)
\end{align}
and thus 
\begin{align}
\int_{-\infty}^w \frac{1}{\Bsf(\omega)} \rmd \Fsf(\omega) = \tP_{\rsf}^\mu(w) - \tP_{\rsf}^\mu(-\infty),
\end{align}
which is clearly proximal.

Conversely, let $\tP(w) := \int_{-\infty}^w \frac{1}{\Bsf(\omega)} \rmd \Fsf(\omega)$ be proximal. Taking (generalized) derivative we obtain 
\begin{align}
\tP'(\omega) = \frac{\Fsf'(\omega)}{\Bsf(\omega)}.
\end{align}
Since $\Bsf$ is a function of $\tP$, say $\Bsf = \Tsf' \circ \tP$, performing integration we obtain 
\begin{align}
\Fsf = \Tsf \circ \tP, 
\end{align}
up to some immaterial constant (that can be absorbed into $\Tsf$). Thus, $(\Fsf, \Bsf)$ satisfies the decomposition \eqref{eq:FB-cons}.
\end{proof}

The following convergence guarantee for PC++ follows directly from the results in \citet{DockhornYEZP21}: 
\begin{restatable}{theorem}{pc}\label{thm:pc}
Fix any $\wv$, the iterates in \eqref{eq:prox+} satisfy:
\begin{align}
\label{eq:pcb}
\sum_{\tau=s}^t \eta_\tau [\langle\wv_\tau - \wv, \widetilde\nabla\ell(\Tsf\wv_\tau)\rangle + \rsf(\wv_{\tau}) - \rsf(\wv) ]\leq 
\Delta_{s-1}(\wv) - \Delta_t(\wv) + 
\sum_{\tau=s}^t \Delta_{\tau}(\wv_{\tau}),
\end{align}
%
where 
$\Delta_{\tau}(\wv) 
:= \rsf_{\tau}(\wv) - \rsf_{\tau}(\wv_{\tau+1})  - \inner{\wv-\wv_{\tau+1}}{\wv_{\tau+1}^*}$ is the Bregman divergence induced by the (possibly nonconvex) function $\rsf_\tau(\wv) := \mu_{\tau+1}\cdot \rsf(\wv) + \tfrac{1}{2}\|\wv\|_2^2$. (Recall that $\mu_t := 1 + \sum_{\tau=1}^{t-1} \eta_\tau$.)
\end{restatable}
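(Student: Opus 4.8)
The plan is to exhibit PC++ as nothing more than the original ProxConnect iteration applied to a \emph{composite} loss, and then to quote the convergence analysis of \citet{DockhornYEZP21} essentially verbatim. Concretely, define the transformed loss $\tilde\ell := \ell\circ\Tsf$. Using the forward--backward decomposition $\Fsf^{\mu}_{\rsf} = \Tsf\circ\tP_{\rsf}^{\mu}$ and $\Bsf^{\mu}_{\rsf} = \Tsf'\circ\tP_{\rsf}^{\mu}$ from \eqref{eq:FB-cons} together with the chain rule, the PC++ update \eqref{eq:prox+} reads
\begin{align}
\wv_{t+1}^* = \wv_t^* - \eta_t\, \Tsf'(\wv_t)\,\widetilde\nabla\ell(\Tsf\wv_t) = \wv_t^* - \eta_t\, \widetilde\nabla\tilde\ell(\wv_t), \quad \wv_t = \tP_{\rsf}^{\mu_t}(\wv_t^*),
\end{align}
which is exactly the PC recursion \eqref{eq:pc} with $\ell$ replaced by $\tilde\ell$. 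Thus the first step is to record this identity and to observe that the regularizer $\rsf$, the proximal quantizer $\tP_{\rsf}^{\mu_t}$, and hence the entire ``dual'' geometry underlying PC are left untouched by the transformation $\Tsf$.

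Second, I would invoke the descent/regret inequality that \citet{DockhornYEZP21} establish for PC (via their connection to the generalized conditional gradient method on the smoothed dual). Since that bound is stated for an arbitrary loss accessed through sample subgradients, applying it to the composite problem $\min_{\wv^*} \tilde\ell(\wv^*) + \rsf(\wv^*)$ yields \eqref{eq:pcb} directly, with the inner-product term carrying the composite subgradient $\widetilde\nabla\tilde\ell(\wv_\tau) = \Tsf'(\wv_\tau)\,\widetilde\nabla\ell(\Tsf\wv_\tau)$. The Bregman terms $\Delta_\tau$ and the telescope on the right-hand side require no re-derivation: they are built solely from $\rsf_\tau(\wv) = \mu_{\tau+1}\rsf(\wv) + \tfrac12\|\wv\|_2^2$ and from the proximal updates, both of which are identical in PC and PC++.

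The one genuine piece of bookkeeping I would still carry out is the Bregman identity that underlies \eqref{eq:pcb}. From the first-order optimality of the proximal map, $\wv_{\tau+1} = \tP_{\rsf}^{\mu_{\tau+1}}(\wv_{\tau+1}^*)$ gives $\wv_{\tau+1}^* \in \partial\rsf_\tau(\wv_{\tau+1})$, so $\Delta_\tau$ is a legitimate (generalized) Bregman divergence; combining its three-point expansion with the update $\wv_{\tau+1}^* = \wv_\tau^* - \eta_\tau\widetilde\nabla\tilde\ell(\wv_\tau)$ and summing over $\tau$ produces the telescope $\Delta_{s-1}(\wv) - \Delta_t(\wv)$ together with the curvature residuals $\sum_\tau \Delta_\tau(\wv_\tau)$. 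Crucially, this is an equality-based argument, so the possible nonconvexity of $\rsf_\tau$ is harmless and no extra assumptions are needed.

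The main obstacle is analytic rather than algebraic: because $\Tsf$ is only assumed locally Lipschitz (e.g.\ the hard-tanh of BNN is nonsmooth), one must justify the chain rule $\widetilde\nabla(\ell\circ\Tsf)(\wv) = \Tsf'(\wv)\,\widetilde\nabla\ell(\Tsf\wv)$ in the generalized (Clarke) sense of \citet{RockafellarWets98}, so that the quantity produced by the backward quantizer $\Bsf^{\mu}_{\rsf}$ is a bona fide sample subgradient of $\tilde\ell$ to which Dockhorn's analysis applies. Once this regularity is secured, the reduction is exact and \eqref{eq:pcb} follows immediately.
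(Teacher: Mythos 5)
Your reduction is exactly the paper's argument: PC++ in \eqref{eq:prox+} is by construction ProxConnect \eqref{eq:pc} applied to the transformed loss $\ell\circ\Tsf$ with the regularizer $\rsf$ and proximal quantizer unchanged, so the regret/Bregman bound of \citet{DockhornYEZP21} carries over verbatim --- which is all the paper itself says (``follows directly from the results in \citet{DockhornYEZP21}''). Your added bookkeeping (reading $\wv_{\tau+1}^*\in\partial\rsf_\tau(\wv_{\tau+1})$ off the proximal optimality condition, the equality-based telescope, and the generalized chain rule for locally Lipschitz $\Tsf$) is consistent with, and slightly more explicit than, the paper's treatment, which disposes of the regularity issue via its local-Lipschitz footnote.
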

The summand on the left-hand side of \eqref{eq:pcb} is related to the duality gap, which is  
a natural measure of stationarity for the nonconvex problem \eqref{eq:tran-reg}. Indeed, it reduces to the familiar ones when convexity is present:
\begin{restatable}{theorem}{corpc}
\label{thm:pcc}
For convex $\ell\circ\Tsf$ and any $\wv$, the iterates in \eqref{eq:prox+} satisfy:
\begin{align}
\label{eq:pcc}
\!\!\min_{\tau=s, \ldots, t} ~ \EE [f(\wv_\tau) \!-\! f(\wv)]\leq
\tfrac{1}{\sum_{\tau=s}^t\eta_\tau} \cdot \EE\big[
\Delta_{s-1}(\wv) \!-\! \Delta_t(\wv) \!+\! 
\sum\nolimits_{\tau=s}^t \Delta_{\tau}(\wv_{\tau}) \big].
\end{align}
If $\rsf$ is also convex, then
\begin{align} \label{eq:pcc2}
\min_{\tau=s, \ldots, t} ~ \EE [f(\wv_\tau) \!-\! f(\wv)]\leq
\tfrac{1}{\sum_{\tau=s}^t\eta_\tau} \cdot \EE\big[
\Delta_{s-1}(\wv) \!+\! 
\sum\nolimits_{\tau=s}^t \tfrac{\eta_\tau^2}{2} \|\widetilde\nabla\ell(\wv_\tau)\|_2^2 \big],
\end{align}
and
\begin{align} \label{eq:pcc3}
\EE\big[f(\bar \wv_t) \!-\! f(\wv)\big]\leq
\tfrac{1}{\sum_{\tau=s}^t\eta_\tau} \cdot \EE\big[
\Delta_{s-1}(\wv) \!+\! 
\sum\nolimits_{\tau=s}^t \tfrac{\eta_\tau^2}{2} \|\widetilde\nabla\ell(\wv_\tau)\|_2^2 \big],
\end{align}
where $\wv_t = \tfrac{\sum_{\tau=s}^t \eta_\tau \wv_\tau}{\sum_{\tau=s}^t \eta_\tau}$, and $f := \ell\circ\Tsf + \rsf$ is the regularized and transformed objective.
\end{restatable}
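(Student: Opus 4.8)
The plan is to treat the PC++ recursion \eqref{eq:prox+} as ordinary ProxConnect applied to the transformed objective $f=\ell\circ\Tsf+\rsf$, so that the master inequality \eqref{eq:pcb} of \Cref{thm:pc} serves as the single starting point for all three bounds. The key observation is that, by the chain rule, the effective search direction $\Bsf^{\mu_\tau}_\rsf(\wv_\tau^*)\cdot\widetilde\nabla\ell(\Fsf^{\mu_\tau}_\rsf(\wv_\tau^*))=\Tsf'(\wv_\tau)\,\widetilde\nabla\ell(\Tsf\wv_\tau)$ is exactly a stochastic (sub)gradient of $g:=\ell\circ\Tsf$ at $\wv_\tau$, whose conditional expectation is a genuine subgradient of $g$; this is where convexity of $\ell\circ\Tsf$ will enter.

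For \eqref{eq:pcc}, I would lower-bound the left-hand side of \eqref{eq:pcb}. Taking conditional expectation and combining unbiasedness of the sample gradient with the subgradient inequality for the convex $g$ gives $\langle\wv_\tau-\wv,\widetilde\nabla\ell(\Tsf\wv_\tau)\rangle\ge g(\wv_\tau)-g(\wv)$ in expectation; adding the regularizer gap $\rsf(\wv_\tau)-\rsf(\wv)$ reconstructs $f(\wv_\tau)-f(\wv)$. Hence $\sum_{\tau=s}^t\eta_\tau\,\EE[f(\wv_\tau)-f(\wv)]$ is dominated by the expectation of the right-hand side of \eqref{eq:pcb}, and replacing the weighted average by its minimum and dividing through by $\sum_\tau\eta_\tau$ yields \eqref{eq:pcc}.

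For \eqref{eq:pcc2}, the extra convexity of $\rsf$ makes each $\rsf_\tau(\wv)=\mu_{\tau+1}\rsf(\wv)+\tfrac12\|\wv\|_2^2$ $1$-strongly convex, so every Bregman term $\Delta_\tau(\cdot)$ is nonnegative and $-\Delta_t(\wv)\le0$ may simply be dropped. The remaining task is the one-step progress bound $\Delta_\tau(\wv_\tau)\le\tfrac{\eta_\tau^2}{2}\|\widetilde\nabla\ell(\Tsf\wv_\tau)\|_2^2$, which I would derive from the prox optimality condition $\wv_{\tau+1}^*\in\partial\rsf_\tau(\wv_{\tau+1})$, the update $\wv_{\tau+1}^*=\wv_\tau^*-\eta_\tau\Tsf'(\wv_\tau)\widetilde\nabla\ell(\Tsf\wv_\tau)$, and the symmetrized identity $\Delta_\tau(\wv_\tau)+D_{\rsf_\tau}(\wv_{\tau+1},\wv_\tau)=\langle\wv_\tau-\wv_{\tau+1},\,\nabla\rsf_\tau(\wv_\tau)-\wv_{\tau+1}^*\rangle$, where $D_{\rsf_\tau}$ denotes the Bregman divergence of $\rsf_\tau$. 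Strong convexity bounds $D_{\rsf_\tau}(\wv_{\tau+1},\wv_\tau)\ge\tfrac12\|\wv_\tau-\wv_{\tau+1}\|_2^2$, after which a Young inequality collapses the right-hand side to $\tfrac{\eta_\tau^2}{2}\|\cdot\|_2^2$.

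Finally, \eqref{eq:pcc3} follows from Jensen: when $g$ and $\rsf$ are both convex, $f$ is convex, so $f(\bar\wv_t)\le\tfrac{1}{\sum_\tau\eta_\tau}\sum_\tau\eta_\tau f(\wv_\tau)$, and subtracting $f(\wv)$ reduces it to the un-minimized form of the estimate already established for \eqref{eq:pcc2}. I expect the main obstacle to be the one-step bound on $\Delta_\tau(\wv_\tau)$: because the smoothing parameter increments from $\mu_\tau$ to $\mu_{\tau+1}=\mu_\tau+\eta_\tau$, the dual point $\wv_\tau^*$ lies in $\partial\rsf_{\tau-1}(\wv_\tau)$ rather than $\partial\rsf_\tau(\wv_\tau)$, so the subgradient $\nabla\rsf_\tau(\wv_\tau)$ in the identity above must be chosen carefully and the resulting $O(\eta_\tau/\mu_\tau)$ correction absorbed. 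This bookkeeping, already carried out in \citet{DockhornYEZP21}, is the only genuinely delicate step; everything else is routine convex analysis applied to \eqref{eq:pcb}.
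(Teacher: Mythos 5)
Your proposal is correct and takes essentially the same route as the paper: the paper gives no standalone proof of this theorem, instead observing that the PC++ update \eqref{eq:prox+} is just ProxConnect run on the transformed objective $f=\ell\circ\Tsf+\rsf$ (your chain-rule observation), so the convex-case convergence bounds of \citet{DockhornYEZP21} carry over verbatim. The details you reconstruct---the subgradient inequality under unbiasedness for \eqref{eq:pcc}, nonnegativity of the Bregman terms plus the one-step bound on $\Delta_\tau(\wv_\tau)$ for \eqref{eq:pcc2}, Jensen for \eqref{eq:pcc3}, with the $\mu_\tau\to\mu_{\tau+1}$ dual-averaging bookkeeping correctly flagged and deferred to \citet{DockhornYEZP21}---are exactly the argument the paper implicitly imports.
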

The right-hand sides of \eqref{eq:pcc2} and \eqref{eq:pcc3} diminish iff $\eta_t\to 0$ and $\sum_t\eta_t = \infty$ (assuming boundedness of the stochastic gradient). We note some trade-off in choosing the step size $\eta_\tau$: both the numerator and denominator of the right-hand sides of \eqref{eq:pcc2} and \eqref{eq:pcc3} are increasing w.r.t. $\eta_\tau$. 
The same conclusion can be drawn for \eqref{eq:pcc} and \eqref{eq:pcb}, where $\Delta_\tau$ also depends on $\eta_\tau$ (through the accumulated magnitude of $\wv_{\tau+1}^*$).

\section{Additional Experimental Settings}
\label{app:exp-s}
\paragraph{Hardware and package:} All experiments were run on a GPU cluster with \texttt{NVIDIA V100} GPUs. The platform we use is PyTorch. Specifically, we apply ViT and DeiT models implemented in Pytorch Image Models (timm) \footnote{\url{https://timm.fast.ai/}}. 

\paragraph{Pre-trained models:} In this work, we applied pre-trained full precision models for fine-tuning tasks. Here we specify the links to the models we used (note that we choose patch size equal to 16 across all models):
\begin{itemize}
    \item ViT-B (ImageNet-1K): \url{https://storage.googleapis.com/vit_models/augreg/B_16-i21k-300ep-lr_0.001-aug_medium1-wd_0.1-do_0.0-sd_0.0--imagenet2012-steps_20k-lr_0.01-res_224.npz};
    \item ViT-B (ImageNet-21K):
    \url{https://storage.googleapis.com/vit_models/augreg/B_16-i21k-300ep-lr_0.001-aug_medium1-wd_0.1-do_0.0-sd_0.0.npz};
    \item DeiT-T (ImageNet-1K):
    \url{https://dl.fbaipublicfiles.com/deit/deit_tiny_patch16_224-a1311bcf.pth};
    \item DeiT-S (ImageNet-1K):
    \url{https://dl.fbaipublicfiles.com/deit/deit_small_patch16_224-cd65a155.pth};
    \item DeiT-B (ImageNet-1K):
    \url{https://dl.fbaipublicfiles.com/deit/deit_base_patch16_224-b5f2ef4d.pth}.
\end{itemize}

\section{Comparison with BiViT}
\label{app:vit}

\citet{HeLZWZZ22} propose BiViT, which considers the same binarization task on vision transformers (specifically, Swin-T and Nest-T). \citet{HeLZWZZ22} follow a different implementation with softmax-aware binarization and information preservation. To fairly compare with this work, we follow the same setting and run PC++ on Swin-T and NesT-T on ImageNet-1K. We observe that BNN++ achieves 71.3\% Top-1 accuracy (BiViT:70.8\%) and 69.3\% Top-1 accuracy (BiViT:68.7\%) respectively on Swin-T and NesT-T. Note that BiViT simply applies BNN as the main algorithm and may be further improved with PC++ algorithms.

\section{Additional forward-backward quantizers}    
\label{sec:add_fb}

\begin{figure}
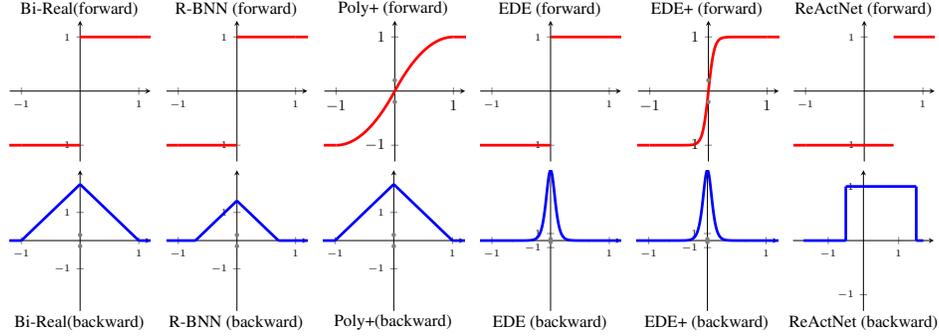

    \centering
    \resizebox{0.9\textwidth}{!}{\begin{tikzpicture}

\node at (0, 3.1)
{\input{img/fb-quantizers/sign}};
\node at (0, 4.8){Bi-Real(forward)};

\node at (0, 0)
{\input{img/fb-quantizers/bireal-b}};
\node at (0, -1.7){ Bi-Real(backward)};

\node at (3.25, 3.1)
{\input{img/fb-quantizers/sign}};
\node at (3.25, 4.8){R-BNN (forward)};

\node at (3.25, 0)
{\input{img/fb-quantizers/rbnn-b}};
\node at (3.25, -1.7){R-BNN (backward)};

\node at (6.5, 3.1)
{\input{img/fb-quantizers/bireal-f}};
\node at (6.5, 4.8){Poly+ (forward)};

\node at (6.5, 0)
{\input{img/fb-quantizers/bireal-b}};
\node at (6.5, -1.7){Poly+(backward)};

\node at (9.75, 3.1)
{\input{img/fb-quantizers/sign}};
\node at (9.75, 4.8){EDE (forward)};

\node at (9.75, 0)
{\input{img/fb-quantizers/ede-b}};
\node at (9.75, -1.7){EDE (backward)};

\node at (13, 3.1)
{\input{img/fb-quantizers/ede-f}};
\node at (13, 4.8){EDE+ (forward)};

\node at (13, 0)
{\input{img/fb-quantizers/ede-b}};
\node at (13, -1.7){EDE+ (backward)};

\node at (16.25, 3.1)
{\input{img/fb-quantizers/sign-s}};
\node at (16.25, 4.8){ReActNet (forward)};

\node at (16.25, 0)
{\input{img/fb-quantizers/indicator-s}};
\node at (16.25, -1.7){ReActNet (backward)};

\end{tikzpicture}}
    \caption{Forward and backward pass for 6 additional ProxConnect++ algorithms.}
    \label{fig:add_prox+}
    \vspace{-10pt}
\end{figure}

\begin{table}
    \centering
    \caption{Results for additional ProxConnect++ algorithms on binarizing vision transformers (binarizing weights only), where the results are the mean of five runs with different random seeds.}
    \label{tab:new}    
    \setlength\tabcolsep{5pt}
    \scalebox{0.85}{\begin{tabular}{cccccccccc}
\toprule

\multirow{2}{*}[-.6ex]{\bf Model} & \multirow{2}{*}[-.6ex]{\bf Dataset} 
 & \multirow{2}{*}[-.6ex]{FP} & \multicolumn{7}{c}{\bf ProxConnect++}\\ 
\cmidrule(l{0pt}r{10pt}){4-10}
 & &   & Bi-Real & R-BNN & Poly+ & EDE & EDE+  & ReAct & BNN++ \\

\midrule

\multirow{3}{*}[-.6ex]{DeiT-T} & CIFAR-10 & 94.85\% & 84.11\% & 84.54\% &  85.31\% & 84.99\% & 85.57\%  & 85.35\% & \bf 86.41\%\\
& CIFAR-100 & 72.37\% & 59.01\% & 59.02\% & 60.00\% & 59.32\% & 60.04\% &  60.11\% & \bftab{60.33\%}\\
&\bf ImageNet-1K & 72.20\% & 64.55\% & 64.59\% & 64.97\% & 64.28\% & 65.01\% &  65.37\% & \bftab{67.34\%} \\
\midrule
\multirow{3}{*}[-.6ex]{DeiT-S} & CIFAR-10 & 95.09\% & 85.01\% & 86.07\% & 84.99\% & 85.37\% & 85.33\% & 85.91\% & \bftab{86.19\%}\\
& CIFAR-100 & 73.19\% & 59.66\% & 59.75\% & 60.14\% & 60.09\% & 61.17\% & 61.09\% & \bftab{62.98\%} \\
& \bf ImageNet-1K & 79.90\% & 70.51\% & 70.87\%  & 71.36\% & 70.66\% & 72.99\% & 72.53\% & \bf 73.53\%  \\
\bottomrule
\end{tabular}}
\end{table}

\blue{
In this section, we summarize additional forward-backward quantizers, which can be either improved or justified with our PC++ framework. Specifically, we find that:
\begin{itemize}
    \item Bi-Real Net \cite{LiuWLYLC18}/R-BNN \cite{LinJXZWWHL20}: $\mathsf{F}(\mathbf{w})=\text{sign}$, $\mathsf{B}(\mathbf{w})=\nabla F(\mathbf{w})$, where $F(\mathbf{w})$ is a piewise polynomial function. We simply choose $\mathsf{F} =F(\mathbf{w})$ and arrive at our legitimate variant Poly+. Note that we gradually increase the coefficient of $F(\mathbf{w})$ such that we ensure full binarization at the end of the training phase. 
    \item EDE in IR-Net \cite{QinGLSWYS20}: $\mathsf{F}(\mathbf{w})=\text{sign}$, $\mathsf{B}(\mathbf{w})=\nabla g(\mathbf{w})=kt(1-\text{tanh}^2(t\mathbf{w}))$, where $k$ and $t$ are control variables varying during the training process, such that $g(\mathbf{w})\approx \text{sign}$ at the end of training. Again, we choose $\mathsf{F} =g(\mathbf{w})$ and arrive at our new legitimate variant EDE+.
    \item ReActNet \cite{LiuSSC20} can be well justified and is a special case of PC++.
\end{itemize}
We visualize these forward-backward quantizers and our new variants in \Cref{fig:add_prox+}. Moreover, we perform experiments on vision transformers to examine the performance of additional quantizers and their modified variants. We report the results in \Cref{tab:new} and observe that (1)Our new proposed Poly+ and EDE+ always outperform the original algorithms and further confirm that our PC++ framework merits theoretical and empirical justifications; (2)BNN++ still outperforms other algorithms on all tasks.}

\section{Additional results for end-to-end training}

\label{app:error}

Finally, we provide the error bars for our main experiments in \Cref{tab:error_bar_cnn} and \Cref{tab:error_bar_vit_bw} for CNN backbones and vision transformer backbones, respectively. 

\begin{table*}[t]
    \centering
    \caption{Error bar for binarizing weights (BW), binarizing weights and activation (BWA) and binarizing weights, activation, with 8-bit accumulators  (BWAA) on CNN backbones. We consider the end-to-end (E2E) pipeline. We compare five variants of ProxConnect++ (BC, PC, BNN, BNN+, and BNN++) with FP, PQ, and rPC. For the end-to-end pipeline, we report the mean of five runs with different random seeds.}
    \label{tab:error_bar_cnn}    
    \setlength\tabcolsep{5pt}
    \scalebox{0.85}{\begin{tabular}{llcccccccc}
\toprule

\multirow{2}{*}[-.6ex]{\bf Dataset}  & \multirow{2}{*}[-.6ex]{\bf Task} 
 & \multirow{2}{*}[-.6ex]{FP} & \multirow{2}{*}[-.6ex]{PQ} & \multirow{2}{*}[-.6ex]{rPC} & \multicolumn{5}{c}{\bf ProxConnect++}\\ 
\cmidrule(l{0pt}r{10pt}){6-10}
 & & & &   &BC & PC & BNN & BNN+ & BNN++ \\

\midrule
\multirow{4}{*}[-.6ex]{CIFAR-10}  & \multirow{2}{*}[-.6ex]{BW} & 92.01\% & 81.59\% & 81.82\% & 87.51\% & 88.05\% & 89.92\% & 89.39\% & \bf 90.03\% \\

& & $\pm 0.19$ & $\pm 0.11$ & $\pm 0.16$ & $\pm 0.07$ & $\pm 0.05$ & $\pm 0.11$ & $\pm 0.13$ & $\pm 0.06$  \\

\cmidrule(l{0pt}r{10pt}){2-10}

& \multirow{2}{*}[-.6ex]{BWA}& 92.01\% & 81.51\% & 81.60\% & 86.99\% & 87.26\% & 89.15\% & 89.02\% & \bf 89.91\% \\

& & $\pm 0.13$ & $\pm 0.16$ & $\pm 0.09$ & $\pm 0.11$ & $\pm 0.23$ & $\pm 0.08$ & $\pm 0.16$ & $\pm 0.09$  \\

\midrule
\multirow{4}{*}[-.6ex]{ImageNet-1K} & \multirow{2}{*}[-.6ex]{BW}  & 78.87\% & 63.23\% & 66.39\% & 67.45\% & 67.51\% & 67.49\% & 66.99\% & \bf 68.11\%  \\

& & $\pm 0.06$ & $\pm 0.11$ & $\pm 0.22$ & $\pm 0.04$ & $\pm 0.09$ & $\pm 0.12$ & $\pm 0.26$ & $\pm 0.02$  \\

\cmidrule(l{0pt}r{10pt}){2-10}

& \multirow{2}{*}[-.6ex]{BWA} & 78.87\% & 61.19\% & 64.17\% & 65.42\% & 65.31\% & 65.29\% & 65.98\% & \bf 66.08\%  \\

& & $\pm 0.18$ & $\pm 0.22$ & $\pm 0.19$ & $\pm 0.22$ & $\pm 0.17$ & $\pm 0.21$ & $\pm 0.15$ & $\pm 0.13$  \\
\bottomrule
\end{tabular}}
\end{table*}

\begin{table*}[t]
    \centering
    \caption{Error bar on binarizing vision transformers (BW and BWA) on ImageNet-1K. We consider the end-to-end (E2E) pipeline. We compare five variants of ProxConnect++ (BC, PC, BNN, BNN+, and BNN++) with FP, PQ, and rPC. The results are the mean of five runs with different random seeds.}
    \label{tab:error_bar_vit_bw}    
    \setlength\tabcolsep{5pt}
    \scalebox{0.85}{\begin{tabular}{llcccccccc}
\toprule

\multirow{2}{*}[-.6ex]{\bf Model} & \multirow{2}{*}[-.6ex]{\bf Task} 
 & \multirow{2}{*}[-.6ex]{FP} & \multirow{2}{*}[-.6ex]{PQ} & \multirow{2}{*}[-.6ex]{rPC} & \multicolumn{5}{c}{\bf ProxConnect++}\\ 
\cmidrule(l{0pt}r{10pt}){6-10}
 & & & &   &BC & PC & BNN & BNN+ & BNN++ \\
 
\midrule
\multirow{4}{*}[-.6ex]{DeiT-T} &  \multirow{2}{*}[-.6ex]{BW} & 72.20\% & 61.23\% & 60.35\% & 63.22\% & 66.15\% & 65.00\% & 66.67\% & \bftab{67.34\%} \\

& & $\pm 0.11$ & $\pm 0.07$ & $\pm 0.19$ & $\pm 0.21$ & $\pm 0.11$ & $\pm 0.15$ & $\pm 0.09$ & $\pm 0.07$  \\

\cmidrule(l{0pt}r{10pt}){2-10}

& \multirow{2}{*}[-.6ex]{BWA} & 72.20\% & 60.01\% & 58.77\% & 62.13\% & 65.29\% & 63.75\% & 65.29\% & \bftab{65.65\%} \\

& & $\pm 0.13$ & $\pm 0.12$ & $\pm 0.08$ & $\pm 0.06$ & $\pm 0.19$ & $\pm 0.18$ & $\pm 0.06$ & $\pm 0.03$  \\

\midrule
\multirow{2}{*}[-.6ex]{DeiT-S} & \multirow{2}{*}[-.6ex]{BW} & 79.91\% & 69.87\% & 68.74\% & 73.16\% & 73.51\% & 73.77\% &73.23\% & \bf 73.53\%  \\

& & $\pm 0.21$ & $\pm 0.26$ & $\pm 0.16$ & $\pm 0.19$ & $\pm 0.22$ & $\pm 0.08$ & $\pm 0.11$ & $\pm 0.13$  \\

\midrule
\multirow{1}{*}[-.6ex]{DeiT-B} &\multirow{2}{*}[-.6ex]{BW} & 81.81\% & 72.54\% & 70.11\% & 76.55\% & 76.61\% & 75.60\% & 76.63\% & \bftab{76.74\%} \\

& & $\pm 0.17$ & $\pm 0.15$ & $\pm 0.23$ & $\pm 0.07$ & $\pm 0.24$ & $\pm 0.17$ & $\pm 0.13$ & $\pm 0.07$  \\
\bottomrule
\end{tabular}}
\vspace{-5pt}
\end{table*}
\end{document}